\newtheorem{theorem}{Theorem}% 按 section 编号
\newtheorem{definition}{Definition}
\newtheorem{remark}{Remark}
\newtheorem*{proof}{Proof}
\newtheorem{prop}{Proposition}
\newcommand{\bs}{\hfill $\blacksquare$}
\title[Dirichlet Energy Enhancement of Graph Neural Networks by Framelet Augmentation]{Dirichlet Energy Enhancement of Graph Neural Networks by Framelet Augmentation}
\author{%
Jialin Chen\\
\institute{Yale University}\\
\email{jialin.chen@yale.edu}\And
Yuelin Wang\\
\institute{Shanghai Jiao Tong University}\\
\email{sjtu\_wyl@sjtu.edu.cn}\And
Cristian Bodnar\\
\institute{University of Cambridge}\\
\email{cb2015@cam.ac.uk}\And
Rex Ying\\
\institute{Yale University}\\
\email{rex.ying@yale.edu}\And
Pietro Li\`{o}\\
\institute{University of Cambridge}\\
\email{pl219@cam.ac.uk}\And
Yu Guang Wang\\
\institute{Shanghai Jiao Tong University}\\
\email{yuguang.wang@sjtu.edu.cn}
}
\begin{document}

\maketitle

\begin{abstract}
Graph convolutions have been a pivotal element in learning graph representations. However, recursively aggregating neighboring information with graph convolutions leads to indistinguishable node features in deep layers, which is known as the over-smoothing issue. The performance of graph neural networks decays fast as the number of stacked layers increases, and the Dirichlet energy associated with the graph decreases to zero as well. In this work, we introduce a framelet system into the analysis of Dirichlet energy and take a multi-scale perspective to leverage the Dirichlet energy and alleviate the over-smoothing issue. Specifically, we develop a \textbf{Framelet Augmentation} strategy by adjusting the update rules with positive and negative increments for low-pass and high-passes respectively. Based on that, we design the \textbf{E}nergy \textbf{E}nhanced \textbf{Conv}olution (\textbf{EEConv}), which is an effective and practical operation that is proved to strictly enhance Dirichlet energy. From a message-passing perspective, EEConv inherits multi-hop aggregation property from the framelet transform and takes into account all hops in the multi-scale representation, which benefits the node classification tasks over heterophilous graphs. Experiments show that deep GNNs with EEConv achieve state-of-the-art performance over various node classification datasets, especially for heterophilous graphs, while also lifting the Dirichlet energy as the network goes deeper.

% In this work, we consider framelet convolutions, which are spectral convolutions on graphs based on framelet transforms that can represent multi-scale graph features through low and high-frequency passes. We develop a framelet augmentation strategy by adjusting the low and high-pass filters with positive and negative increments, respectively. We prove that the resulting approach can increase the Dirichlet energy and thus leads to an Energy Enhanced Convolution (EEConv) for GNNs. 
\end{abstract}

\section{Introduction}\label{sec:intro}
Many types of real-world data, such as social networks, recommendation systems, chemical molecules, contain indispensable relational information, and thus can be naturally represented as a graph. Recently, Graph Neural Networks (GNNs) \cite{kipf2016semi,hamilton2017inductive,velivckovic2017graph} have achieved a myriad of eye-catching performances in multiple applications on graph-structured data. However, for traditional GCNs or other extensions of GNNs, there is a key limitation: the over-smoothing phenomenon, which means that the increase of the model's depth gives rise to the decay of predictive performance. 

There are mainly two types of approaches to enable deep graph neural networks. One is from empirical techniques in graph convolutional layers, like residual connections~\cite{li2019deepgcns, xu2018representation}, weight normalization~\cite{zhao2019pairnorm}, edge dropout~\cite{rong2019dropedge}, etc. The other controls Dirichlet energy to alleviate the over-smoothing phenomenon~\cite{zhou2021dirichlet}. Dirichlet energy is a metric to measure the average distance between connected nodes in the feature space, however, rapidly converges to zero~\cite{cai2020note, oono2019graph} as the number of stacked layers increases. From a spectral perspective, recent works \cite{bo2021beyond,nt2019revisiting,chen2020measuring} discover that graph convolution works well for the case where the low-frequency components are sufficient for prediction, but fails in the scenarios where the high-frequency information is also necessary, which often happens in real-world heterophilous graphs. The failure is due to the denoising effect of graph convolution layers. The unsatisfying performance of GNNs usually stems from insufficient attention to high-frequency components, especially for heterophilous graphs. Therefore, the impact and the potential advantage of multi-scale representation on the over-smoothing issue deserve further exploration. 

Most existing methods that target over-smoothing only consider graph information in the spatial domain and do not characterize the asymptotic behaviors of different frequency components and their different contributions to over-smoothing problem. Besides, empirical techniques lack a theoretical guarantee of the stability or enhancement of the Dirichlet energy. Compared with the previous method that alleviates over-smoothing by controlling the Dirichlet energy, we are the first to theoretically guarantee the enhancement of Dirichlet energy. Furthermore, we emphasize a multi-scale representation for graph-structured data~\cite{xu2019graph,zheng2021framelets,maggioni2008diffusion}, to study asymptotic behaviors of different frequency components.

\paragraph{Present Work} We materialize this idea in a novel \textbf{E}nergy \textbf{E}nhanced \textbf{Conv}olution (\textbf{EEConv}) that can be repeatedly stacked to construct a more robust and deeper GNN architecture by lifting the Dirichlet energy to a higher and steady value. Figure~\ref{fig:cover} illustrates the computational flow of an EEConv layer. We first decompose the graph signal into framelet coefficients (Section~\ref{sec:framelets}), where the global graph structure and all-hop information are embedded by the framelet transform. Then, \textbf{Framelet Augmentation} is applied by modifying the corresponding diagonals of the adjacency matrices for low-pass and high-passes respectively (Section~\ref{sec:3}). Meanwhile, the Dirichlet energy associated with the graph 
is enhanced in this operation. Finally, the framelet coefficients will be reconstructed back to the original size and fed to the non-linear activation.
\begin{figure}[t]
\vspace{-0.5cm}
    \centering
    \includegraphics[width=0.90\textwidth]{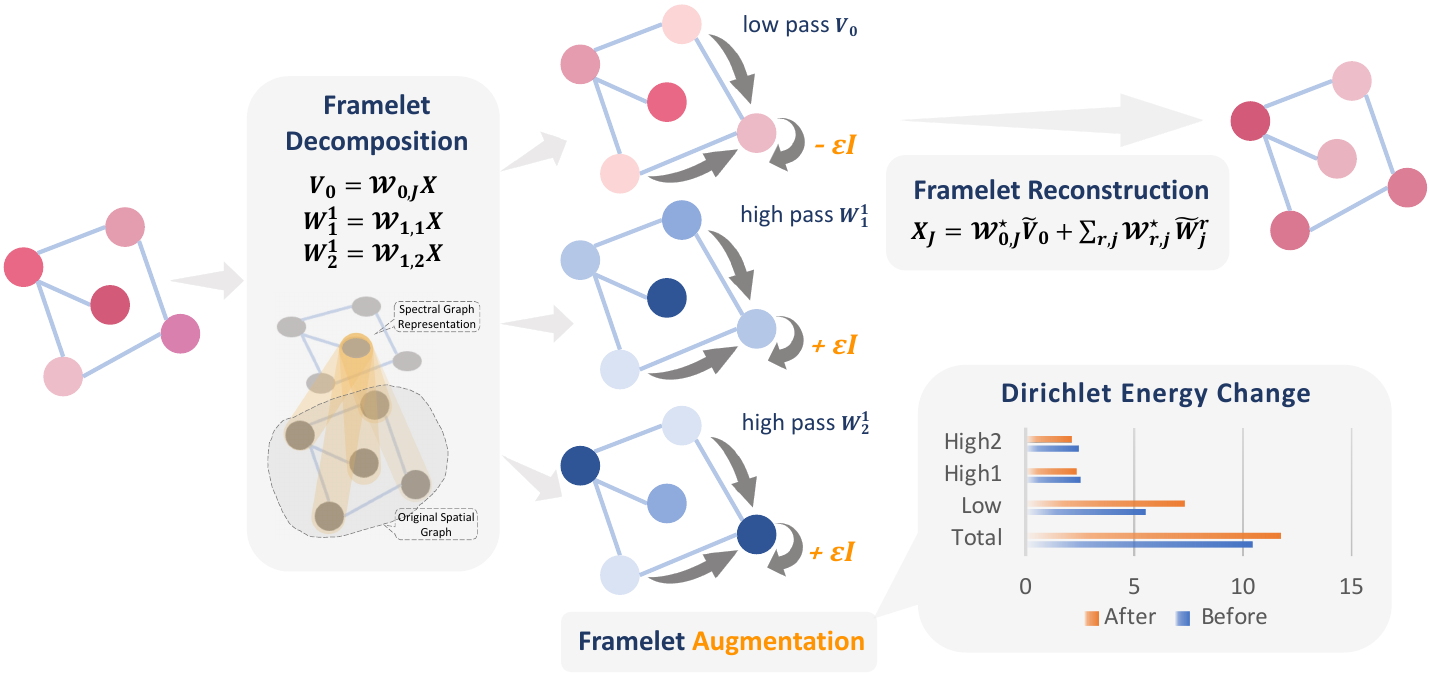}
    \caption{An illustration of the proposed Energy Enhanced Convolution. We first conduct framelet decomposition on the original graph signal (Eq.~\ref{coefficients_0}) and obtain one low-pass and two high-passes. The Framelet Augmentation is applied by adding or subtracting an increment for low and high-passes (Eq.~\ref{modifed_framelet_convolution}). The total Dirichlet energy will be lifted in this process. A framelet reconstruction operator follows to resize the framelet coefficients to the original size.}
    \vspace{-0.8cm}
    \label{fig:cover}
\end{figure}
Our proposed framelet augmentation strategy can be easily extended to other message-passing models with Laplacian-based propagation rules, such as heat diffusion on manifolds. We discuss possible extensions in Section~\ref{discussion}.

We utilize the different roles and contributions of frequency components in graph prediction tasks to control Dirichlet energy. Low-frequency signals can make the representations of adjacent nodes similar and closer, while high-frequency signals make them more distant and distinguishable. Intuitively, we let the model reduce the focus to the low-pass information of the node itself, while increasing the focus to the high-pass components of the neighboring information. Moreover, the decomposability of Dirichlet energy provides us with the feasibility of regulating the energy ratio of each pass. It can be proved that Dirichlet energy is strictly enhanced with the framelet augmentation strategy.

% One can intuitively understand the over-smoothing phenomenon by being aware of the existence of the receptive filed. 
% First, the node embeddings produced by GNNs are determined by the receptive field, which typically includes the node itself and its neighbors. As the number of layers increases and the node features evolve, the receptive field of the node dilates exponentially. Hence, intuitively, we want to carefully control the expansion of the receptive field. Spectral graph theory provides inspiration for how this can be achieved. Some studies show that information from the neighbors is strongly associated with low-frequency components \citep{wu2019simplifying, li2019label}. This encourages us to enhance the Dirichlet energy of the low-pass in the propagation of the network and for balancing weaken high-passes.

% Second, the Dirichlet energy can be decomposed into the energy of information in all passes. In the decomposition of framelet Dirichlet energy, one important observation is that the diagonal matrix is translatable, which implies that the self-enhancement/impairment are allowed to some degree. With the simple trick of introducing a small perturbation, the energy can be separated into two parts: one remains the original part and another depends on the perturbation. We use this as a remedy for over-smoothing by carrying out an augmentation for the framelet convolution where the low-pass and high-passes in the framelet transforms are added/subtracted by a small perturbation. As a result, the Dirichlet energy does not decay. 

To this end, the contributions of this work are threefold: \textbf{(1)} We perform a systematic analysis of the Dirichlet energy based on the framelet system and propose a novel Framelet Augmentation strategy to enhance the Dirichlet energy.
\textbf{(2)} We theoretically prove the different asymptotic behaviors and Dirichlet energy of low-pass and high-passes during the feature propagation, and validate them through sufficient experiments. \textbf{(3)} We carry out experiments to verify the effectiveness of Framelet Augmentation and demonstrate that the proposed approach achieves outstanding performance on real-world node classification tasks, especially for heterophilous graphs.

\section{Background and Preliminaries}\label{sec:framelets}
\subsection{Framelet Analysis on Graph}
Wavelet analysis on manifolds provides a powerful multi-scale representation tool for geometric deep learning. In this paper, we mainly focus on tight framelets on a graph \cite{zheng2021framelets,dong2017sparse,zhang2018end}, which is a multi-scale affine system. 
\begin{figure}[!t]
    \vspace{-0.3cm}
    \centering
    \includegraphics[width=\textwidth]{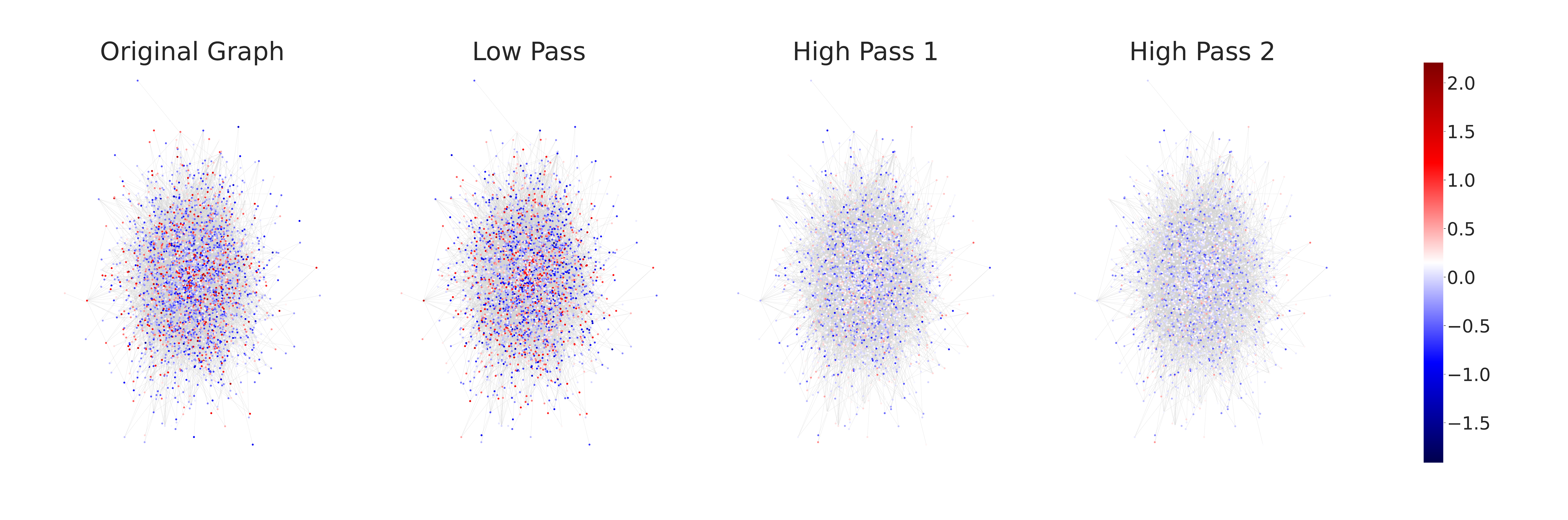}
    \vspace{-0.5cm}
    \caption{Visualization of framelet coefficients for node classification task on Cora. From left to right we show the original graph, low-pass, high-pass 1 and high-pass 2 respectively. The projected value is the first principal component of the high-dimensional features.}
    \label{fig:framelet}
    \vspace{-0.3cm}
\end{figure}
For a graph $\mathcal{G}$ with $N$ nodes and graph Laplacian $\Delta$, let $U=[\boldsymbol{u}_1,\cdots, \boldsymbol{u}_N]$ be the matrix of eigenvectors of $\Delta$, whose transpose is used for the Graph Fourier Transform, and $\Lambda=\textup{diag}(\lambda_1,\cdots,\lambda_N)$ be the diagonal matrix of the eigenvalues. Framelets over the graph is generated by a set of \textit{scaling functions} $\Phi = \{\alpha;\beta^{(1)},\cdots,\beta^{(n)}\}\subset L_1(\mathbb{R})$ associated with a \textit{filter bank} $\eta = \{a; b^{(1)},\cdots b^{(n)}\}$, satisfying $\widehat{\alpha}(2\xi)=\widehat{a}(\xi)\widehat{\alpha}(\xi)$ and $\widehat{\beta^{(r)}}(2\xi)=\widehat{b^{(r)}}(\xi)\widehat{\alpha}(\xi)$, for any $\xi\in \mathbb{R}$, where $\widehat{h}(\xi)$ denotes the \textit{Fourier transform} of $h$, which is defined by $\widehat{h}(\xi) :=\sum_{k\in \mathbb{Z}}h(k)e^{-2\pi ik\xi}$. $n$ denotes the number of high-pass filters. $\varphi_{j,p}(v)$ and $\psi_{j,p}^{r}(v)$ are the \textit{low-pass} and \textit{high-pass} framelets at node $v$ associated to node $p$ for \textit{scale level} $j \in\{1,\cdots, J\}$ respectively, which is defined by
\begin{equation*}
    \varphi_{j,p}(v) = \sum_{l=1}^N \widehat{\alpha}\left(\frac{\lambda_{l}}{2^j}\right)u_l(p)u_l(v);\quad 
    \psi_{j,p}^{r}(v) = \sum_{l=1}^N \widehat{\beta}^{(n)}\left(\frac{\lambda_{l}}{2^j}\right)u_l(p)u_l(v), \; r=1,\dots, n.
\end{equation*}
Therefore, the framelet transforms actually take into account the global information and all the hops of the graph into its multi-scale representations. The low-pass and high-pass framelets distill the coarse-grained and fine-grained information of graph signals.

The \textit{framelet coefficients} $V_0, W_j^r \in \mathbb{R}^{N\times d}$ are defined as the inner-product of the framelet and the graph signal $X\in \mathbb{R}^{N\times d}$, where $d$ denotes the feature dimension. The size of $V_0, W_j^r$ is the same as the graph signal (node features) $X$.
\begin{equation}
    V_0 =\left \langle\varphi_{0,\cdot}, X\right \rangle = U^\top \widehat{\alpha}\bigl(\frac{\Lambda}{2}\bigr)UX\hbox{~~and~~} W_j^r= \left \langle\psi_{j,\cdot}^r, X\right \rangle=U^\top\widehat{\beta^{(r)}}\bigl(\frac{\Lambda}{2^{j+1}}\bigr)UX,
    \label{coefficients_0}
\end{equation}
% where the scaling functions on $\mathcal{G}$ are as follows,
% \begin{equation*}
%     \widehat{\alpha}\bigl(\frac{\Lambda}{2^{j+1}}\bigr)= \mathrm{diag}\left(\widehat{\alpha}\bigl(\frac{\lambda_1}{2^{j+1}}\bigr),\cdots,\widehat{\alpha}\bigl(\fracz x q{\lambda_N}{2^{j+1}}\bigr)\right),\quad
%     \widehat{\beta^{(r)}}\bigl(\frac{\Lambda}{2^{j+1}}\bigr)
%     =\mathrm{diag}\left(\widehat{\beta^{(r)}}\bigl(\frac{\lambda_1}{2^{j+1}}\bigr),\cdots,\widehat{\beta^{(r)}}\bigl(\frac{\lambda_N}{2^{j+1}}\bigr)\right).
% \end{equation*}
Let $\mathcal{W}_{r,j}$ denote the decomposition operators given by $V_0 = \mathcal{W}_{0,J}X$ and $W_j^r = \mathcal{W}_{r,j}X$. Then according to Eq.~\ref{coefficients_0}, we obtain the framelet transform matrices for decomposition:
    \begin{equation}
    \begin{aligned}
        \mathcal{W}_{0,J} &=U^\top\widehat{a}(2^{-K+J-1} \Lambda)\cdots\widehat{a}(2^{-K}\Lambda)U:=U^\top \Lambda_{0,J}U,\\
        \mathcal{W}_{r,1} &= U^\top \widehat{b^{(r)}}(2^{-K}\Lambda)U:=U^\top \Lambda_{r,1}U,\\
        \mathcal{W}_{r,j}&=U^\top \widehat{b^{(r)}}(2^{-K+j-1}\Lambda)\widehat{a}(2^{-K+j-2}\Lambda)\cdots\widehat{a}(2^{-K}\Lambda)U:=U^\top \Lambda_{r,j}U.
    \end{aligned}
    \label{operators}
\end{equation}
Here, $K$ is a sufficiently large value such that the Laplacian's biggest eigenvalue $\lambda_{max}\leq 2^K\pi$. We use Haar-type filters, a classic multi-scale system with acceptable computational cost in our implementation. With Haar-type filters, we have $\widehat{\alpha}(\frac{\Lambda}{2}) = \text{cos}(\frac{\Lambda}{8})\text{cos}(\frac{\Lambda}{16})$, $\widehat{\beta}(\frac{\Lambda}{2}) = \text{sin}(\frac{\Lambda}{8})\text{cos}(\frac{\Lambda}{16})$ and $\widehat{\beta}(\frac{\Lambda}{4}) = \text{sin}(\frac{\Lambda}{16})$ to construct a framelet system of 2 scale level ($j=1,2$) and 1 high-pass filter ($r=1$). Thus, we obtain one low-pass ($V_0$) and two high-passes ($W_1^1, W_2^1$). Figure~\ref{fig:framelet} shows the scattering plots of the principal component of framelet coefficients on the Cora dataset. We can observe that the low-pass provides an approximation of the original graph signal while the high-passes distill the detail information.
\paragraph{Energy Gap}
The magnitude of the high-passes coefficients is usually much smaller than the low-pass. With $L_2$ norm as the energy of signals, we can prove that the sum of high-pass energies is less than that of the low-pass, or precisely, $\|W_1^1\|^2+\|W_2^1\|^2\leq \|V_0\|^2$. See the proof and $L_2$ norm statistical results in Figure~\ref{fig:L2_norm} in Appendix~\ref{energy_gap}. This motivates us to consider the energy imbalance between low and high-passes. 

% \begin{equation}
%     \begin{aligned}
%         \mathcal{W}_{0,J} &\approx U^\top\mathcal{T}_0(2^{-K+J-1}\Lambda)\cdots\mathcal{T}_0(2^{-K}\Lambda)U=\mathcal{T}_0(2^{-K+J-2}\Delta)\cdots\mathcal{T}_0(2^{-K}\Delta),\\
%         \mathcal{W}_{r,1} &\approx U^\top\mathcal{T}_r(2^{-K}\Lambda)U=\mathcal{T}_r(2^{-K}\Delta),\\
%         \mathcal{W}_{r,j}&\approx U^\top\mathcal{T}_r(2^{-K+j-1}\Lambda)\mathcal{T}_0(2^{-K+j-2}\Lambda)\cdots\mathcal{T}_0(2^{-K}\Lambda)U\\
%         &=\mathcal{T}_r(2^{-K+j-1}\Delta)\mathcal{T}_0(2^{-K+j-2}\Delta)\cdots\mathcal{T}_0(2^{-K}\Delta).
%     \end{aligned}
% \end{equation}
\subsection{Dirichlet Energy}
Dirichlet Energy measures the degree of over-smoothing phenomenon, by calculating the average representation distance between connected nodes. Over-smoothing representations will produce a small value of Dirichlet Energy and cause the decay of the model's prediction performance. Let $\Tilde{A}=A+I_N$ be the adjacency matrix of the original graph with self-loops. $\Tilde{D}$ is the diagonal degree matrix associated with $\Tilde{A}$. With the augmented adjacency matrix $\widehat{A} =\Tilde{D}^{-\frac{1}{2}}\Tilde{A}\Tilde{D}^{-\frac{1}{2}}$, augmented normalized Laplacian~\citep{wu2019simplifying} of the input graph is defined as $\Tilde{\Delta}=I_N-\widehat{A} = I_N-\Tilde{D}^{-\frac{1}{2}}\Tilde{A}\Tilde{D}^{-\frac{1}{2}}$.
\begin{definition}
    Dirichlet energy $E(X)$ of the signal $X\in \mathbb{R}^{N\times 1}$ on the graph $\mathcal{G}(V,E)$ is defined as 
    \vspace{-0.2cm}
    \begin{equation*}
        E(X) = X^\top\Tilde{\Delta}X = \frac{1}{2}\sum_{(i,j)\in E} w_{ij}\left(\frac{X_i}{\sqrt{1+d_i}}-\frac{X_j}{\sqrt{1+d_j}}\right)^2,
    \end{equation*}
    where $\Tilde{\Delta}$ is the augmented normalized Laplacian. Similarly, for multiple channels the Dirichlet energy is defined as \textup{trace}$(X^\top\Tilde{\Delta}X)$.
\end{definition}

For the propagation rule of GCN \cite{kipf2016semi}:
$ H^{(l+1)}=\sigma(\Tilde{D}^{-\frac{1}{2}}\Tilde{A}\Tilde{D}^{-\frac{1}{2}}H^{(l)}W^{(l)})$, where $H^{(l)}$ is the feature representations of the $l$-th layer, $W^{(l)}$ is the $l$-th layer weight matrix, the following Theorem~\ref{thm:convergence}~\cite{oono2019graph} implies the convergent behavior of node features. The subspace that node features converge to is formulated with the bases of the eigenspace of graph Laplacian~\cite{cai2020note, oono2019graph}. 
We first clarify the notations for Theorem~\ref{thm:convergence}. Let $K$ be the null space of the graph laplacian $\Tilde{\Delta}$. The subspace $\mathcal{M}$ is defined by $\mathcal{M}=K\otimes\mathbb{R}^d
=\bigl\{\sum_{m=1}^M e_m\otimes w_m | w_m\in \mathbb{R}^d; e_m\in K\bigr\}\subseteq\mathbb{R}^{N\times d}$. $d$ is the feature dimension of the graph signal. The distance between graph signal $X\in \mathbb{R}^{N\times d}$ and subspace $\mathcal{M}$ is defined as $d_{\mathcal{M}}(X)=\textup{inf}_{m\in\mathcal{M}}\{\|X-m\|_F\}$, where $F$ denotes the Frobenius norm.

\begin{theorem}\textsc{\cite{oono2019graph}}
For GCN models, we have that $d_{\mathcal{M}}(H^{(l+1)})\leq s_l\lambda d_{\mathcal{M}}(H^{(l)})$, where $\lambda$ is the second largest eigenvalue of the augmented adjacency matrix $\widehat{A}$ and $s_l$ is the supremum of all singular values of the $l$-th layer weight matrix $W^{(l)}$.
\label{thm:convergence}
\end{theorem}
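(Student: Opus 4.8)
The plan is to reduce the whole inequality to tracking how the three constituent operations of a GCN layer --- left multiplication by $\widehat{A}$, right multiplication by the weight matrix $W^{(l)}$, and the componentwise nonlinearity $\sigma$ --- each act on the distance functional $d_{\mathcal{M}}(\cdot)$. First I would exploit the product structure of $\mathcal{M}$: since $\mathcal{M}=K\otimes\mathbb{R}^d$, the orthogonal projector onto $\mathcal{M}$ factorizes as $P_{\mathcal{M}}=P_K\otimes I_d$, where $P_K$ is the orthogonal projection of $\mathbb{R}^N$ onto $K=\ker\Tilde{\Delta}$. Because $\mathcal{M}$ is a closed linear subspace, the infimum defining $d_{\mathcal{M}}$ is attained at the orthogonal projection, giving the clean identity $d_{\mathcal{M}}(X)=\|(I_N-P_K)X\|_F$, which I would use in place of the infimum throughout.

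The core is a contraction estimate for the left action of $\widehat{A}$. Observe that $K$ is exactly the eigenspace of $\widehat{A}$ for eigenvalue $1$ (equivalently $\ker\Tilde{\Delta}$, since $\Tilde{\Delta}=I_N-\widehat{A}$), so both $K$ and its orthogonal complement $K^\perp$ are $\widehat{A}$-invariant; hence $\widehat{A}$ commutes with $P_K$ and with $I_N-P_K$. Restricted to $K^\perp$ the operator norm of $\widehat{A}$ is $\lambda$, so I would write $\|(I_N-P_K)\widehat{A}X\|_F=\|\widehat{A}(I_N-P_K)X\|_F\le\lambda\|(I_N-P_K)X\|_F$, i.e. $d_{\mathcal{M}}(\widehat{A}X)\le\lambda\,d_{\mathcal{M}}(X)$. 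For the right action of $W^{(l)}$, the projector $(I_N-P_K)$ acts on rows while $W^{(l)}$ acts on columns, so they commute, and submultiplicativity $\|YW\|_F\le\|Y\|_F\,\|W\|_{\mathrm{op}}$ with $\|W^{(l)}\|_{\mathrm{op}}=s_l$ gives $d_{\mathcal{M}}(YW^{(l)})\le s_l\,d_{\mathcal{M}}(Y)$.

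The remaining, and I expect most delicate, step is the nonlinearity, where I would prove $d_{\mathcal{M}}(\sigma(Z))\le d_{\mathcal{M}}(Z)$. The natural route combines the fact that $\sigma=\mathrm{ReLU}$ is $1$-Lipschitz with the invariance $\sigma(\mathcal{M})\subseteq\mathcal{M}$: if $\sigma(P_{\mathcal{M}}Z)\in\mathcal{M}$, then $d_{\mathcal{M}}(\sigma(Z))\le\|\sigma(Z)-\sigma(P_{\mathcal{M}}Z)\|_F\le\|Z-P_{\mathcal{M}}Z\|_F=d_{\mathcal{M}}(Z)$. The hard part is justifying $\sigma(\mathcal{M})\subseteq\mathcal{M}$, which is not automatic: it rests on the concrete structure of $K$ for the augmented Laplacian, namely that $K$ is spanned by nonnegative Perron-type vectors supported on the connected components, where the self-loops in $\Tilde{A}$ guarantee aperiodicity and a definite sign pattern. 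This positivity is precisely what lets the componentwise ReLU preserve $\mathcal{M}$, and it is the only place where the spectral geometry of $\widehat{A}$, rather than soft linear algebra, is essential.

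Finally I would chain the three bounds in the order the layer applies them,
\begin{equation*}
d_{\mathcal{M}}(H^{(l+1)})=d_{\mathcal{M}}\bigl(\sigma(\widehat{A}H^{(l)}W^{(l)})\bigr)\le d_{\mathcal{M}}\bigl(\widehat{A}H^{(l)}W^{(l)}\bigr)\le s_l\,d_{\mathcal{M}}\bigl(\widehat{A}H^{(l)}\bigr)\le s_l\lambda\,d_{\mathcal{M}}(H^{(l)}),
\end{equation*}
which is exactly the claimed inequality.
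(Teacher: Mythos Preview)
The paper does not supply its own proof of this theorem: it is stated with an explicit citation to Oono and Suzuki and used purely as background, with no argument given in either the main text or the appendix. So there is nothing in the paper to compare your proposal against.

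That said, your proposal is a faithful and correct reconstruction of the original argument from the cited source. The three-step decomposition (contraction under $\widehat{A}$ via invariance of $K$ and $K^\perp$, the operator-norm bound for right multiplication by $W^{(l)}$, and nonexpansiveness under $\sigma$ via $\sigma(\mathcal{M})\subseteq\mathcal{M}$) is exactly how Oono and Suzuki prove it, and you have correctly isolated the one genuinely nontrivial point, namely that ReLU preserves $\mathcal{M}$ because the null space $K$ of the augmented Laplacian is spanned by entrywise nonnegative Perron vectors supported on the connected components. One small clarification: for the contraction step to yield the factor $\lambda$ as stated, $\lambda$ must be read as the largest \emph{absolute value} among the eigenvalues of $\widehat{A}$ on $K^\perp$ (equivalently $\max_i |1-\tilde{\lambda}_i|$ over nonzero eigenvalues $\tilde{\lambda}_i$ of $\tilde{\Delta}$), which is how the cited paper defines it; the phrasing ``second largest eigenvalue'' in the present paper is slightly loose on this point, but your argument implicitly uses the correct quantity.
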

The convergence rate of the distance between node features and the subspace is positively related to the eigenvalues of the $\widehat{A}$ \cite{huang2020tackling}, generating the consistent feature representations of nodes. over-smoothing is especially detrimental in heterophilous graph tasks, where adjacent nodes are more likely to have different labels. Thus, too similar feature representations between connected nodes (but most likely with different labels) lead to the failure of GNNs in these tasks.  

\section{Framelet Augmentation Strategy}\label{sec:3}
\subsection{Framelet Convolution} 
With the above Laplacian-based framelet transforms, we develop the framelet (graph) convolution similar to the graph convolution (GCNConv \cite{kipf2016semi}) as follows:
\begin{equation}
        H^{(l+1)}_{r,j}=\sigma(\Tilde{D}^{-\frac{1}{2}}\Tilde{A}\Tilde{D}^{-\frac{1}{2}}\mathcal{W}_{r,j}H^{(l)}W^{(l)}_{r,j})\qquad
        H^{(l+1)} = \mathcal{V}(H^{(l+1)}_{0,J};H^{(l+1)}_{1,1},\cdots,H^{(l+1)}_{n,J}),
    \label{framelet_convolution}
\end{equation}
where $(r,j)\in\{(r,j)|r=1,\cdots,n;j=1,\cdots,J\}\cup\{(0,J)\}$, $W_{r,j}^{(l)}$ is the trainable weight matrix corresponding to the $l$-th layer and the $(r,j)$-th pass, $\mathcal{V}$ is the framelet reconstruction operator given by $X_J =\mathcal{V}(V_0,W_{1,1},\cdots, W_{n,J}) = \mathcal{W}_{0,J}^\star V_0+\sum_{r=1}^n\sum_{j=1}^J\mathcal{W}_{r,j}^\star W_{r,j}$, where the superscript $\star$ indicates the conjugate transpose of the matrix. We can observe that $\mathcal{V}$ reconstructs the low-pass and high-pass coefficients back to the original size.

Compared with the existing framelet graph model, UFG \cite{zheng2021framelets}, which is a filter learning method in the frequency domain, our framelet graph convolution inherits the message-passing pattern and generalizes that to multi-scales representation systems.

\subsection{Framelet Dirichlet Energy}
Based on Section~\ref{sec:framelets}, we define framelet Dirichlet energy for low-pass and high-passes signals.
\begin{equation}\label{eq:dirichlet energy framelet}
    E_{0,J}(X)=(\mathcal{W}_{0,J}X)^{\top} \Tilde{\Delta}(\mathcal{W}_{0,J}X);\qquad
    E_{r,j}(X)=(\mathcal{W}_{r,j}X)^{\top} \Tilde{\Delta}(\mathcal{W}_{r,j}X).
\end{equation}
The total framelet Dirichlet energy is then defined as the sum of Dirichlet energy in each pass: $$E_{total}(X) = \sum_{r=1}^{n}\sum_{j=1}^J E_{r,j}(X) + E_{0,J}(X).$$
\begin{prop}
The Dirichlet energy is conserved under framelet decomposition:
\begin{equation}
    E_{total}(X)=\sum_{r=1}^{n}\sum_{j=1}^J E_{r,j}(X) + E_{0,J}(X)=E(X).
\end{equation}
\label{DE_conservation_1}
\end{prop}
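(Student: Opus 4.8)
The plan is to reduce the conservation statement to the tight-frame (perfect reconstruction) identity satisfied by the framelet decomposition operators. First I would write each framelet Dirichlet energy in quadratic form: by Eq.~\ref{eq:dirichlet energy framelet}, $E_{r,j}(X) = X^\top \mathcal{W}_{r,j}^\star \widetilde{\Delta}\, \mathcal{W}_{r,j} X$ and likewise $E_{0,J}(X) = X^\top \mathcal{W}_{0,J}^\star \widetilde{\Delta}\, \mathcal{W}_{0,J} X$, so that the total framelet energy equals $X^\top \bigl( \mathcal{W}_{0,J}^\star \widetilde{\Delta} \mathcal{W}_{0,J} + \sum_{r,j} \mathcal{W}_{r,j}^\star \widetilde{\Delta} \mathcal{W}_{r,j} \bigr) X$. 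The goal is therefore to show that the operator in parentheses is exactly $\widetilde{\Delta}$.

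Next I would exploit the spectral form of the operators from Eq.~\ref{operators}. Writing $\mathcal{W}_{r,j} = U^\top \Lambda_{r,j} U$ with $\Lambda_{r,j}$ diagonal, and using that $U$ is orthogonal ($U U^\top = I$) and diagonalizes the Laplacian appearing in the Dirichlet energy, say $\widetilde{\Delta} = U^\top \Lambda U$, each summand collapses to
\[
\mathcal{W}_{r,j}^\star \widetilde{\Delta} \mathcal{W}_{r,j} = U^\top \Lambda_{r,j}^\star \Lambda \Lambda_{r,j} U = U^\top |\Lambda_{r,j}|^2 \Lambda U,
\]
since the diagonal matrices $\Lambda_{r,j}$ and $\Lambda$ commute. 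Summing over all passes and factoring out $\Lambda$ leaves $U^\top \bigl( |\Lambda_{0,J}|^2 + \sum_{r,j} |\Lambda_{r,j}|^2 \bigr) \Lambda U$.

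The crux is the identity $|\Lambda_{0,J}|^2 + \sum_{r,j} |\Lambda_{r,j}|^2 = I$. This is precisely the perfect-reconstruction condition guaranteeing that the reconstruction operator $\mathcal{V}$ inverts the decomposition, i.e. $\mathcal{W}_{0,J}^\star \mathcal{W}_{0,J} + \sum_{r,j} \mathcal{W}_{r,j}^\star \mathcal{W}_{r,j} = I$; in the frequency domain it reduces to the filter-bank partition of unity $|\widehat{a}(\xi)|^2 + \sum_r |\widehat{b^{(r)}}(\xi)|^2 = 1$ together with the telescoping of the low-pass products across the scales $j = 1,\dots,J$. Once this is in hand, the bracket becomes the identity and the total framelet energy equals $X^\top U^\top \Lambda U X = X^\top \widetilde{\Delta} X = E(X)$, as claimed. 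I expect the main obstacle to be verifying this partition of unity for the Haar-type filter bank and confirming that the framelet system is built on the same eigenbasis $U$ that diagonalizes $\widetilde{\Delta}$: if the decomposition were defined through a different Laplacian, the operators $\mathcal{W}_{r,j}$ would not commute cleanly with $\widetilde{\Delta}$ and spurious cross terms could survive, breaking the clean collapse above.
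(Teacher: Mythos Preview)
Your proposal is correct and follows essentially the same route as the paper's proof: write each framelet energy as $X^\top U^\top \Lambda_{r,j}\Lambda\Lambda_{r,j} UX$, use commutativity of the diagonals, and invoke the tight-frame identity $\sum_{r,j}\Lambda_{r,j}^2 = I$ to collapse the sum to $X^\top U^\top \Lambda U X = E(X)$. You even anticipated the one subtlety the paper glosses over, namely that the framelet eigenbasis $U$ must coincide with the eigenbasis diagonalizing $\widetilde{\Delta}$.
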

\begin{remark}
\vspace{-0.2cm}
The Dirichlet energy components $E_{r,j}(X):= X^\top \mathcal{W}_{r,j}^\top\Tilde{\Delta} \mathcal{W}_{r,j}X$ are controlled by $\Lambda_{r,j}^2$, the diagonal matrix given in Eq.~\ref{operators}, where $(r,j)\in\{(r,j)|r=1,\cdots,n;j=1,\cdots,J\}\cup\{(0,J)\}$.
\label{remark_1}
\end{remark}
Proposition~\ref{DE_conservation_1} and Remark~\ref{remark_1} guarantee that we can decompose the signal into low-pass and high-passes and precisely control their Dirichlet energy proportions, without changing the overall Dirichlet energy. See the proofs in Appendix~\ref{DE_proof}.
\subsection{Dirichlet Energy Enhancement Architecture}\label{sec:framelet_aug}
\paragraph{Energy Enhanced Convolution}
The key idea to tackle the over-smoothing issue is to preserve the Dirichlet Energy and avoid its exponential decay to zero with respect to the number of layers. Motivated by this, we propose a \textbf{Framelet Augmentation} strategy, using the properties of multi-scale framelets to enhance the overall Dirichlet energy. To take advantage of the energy gap between low-pass and high-passes, we decouple the low-pass and high-passes propagation and modify the low-pass adjacency matrix $\widehat{A}^L$ and high-pass adjacency matrix $\widehat{A}^H$ separately. The augmented normalized Laplacian $\Tilde{\Delta}$ is changed correspondingly, since $\Tilde{\Delta} = I_N-\widehat{A}$. $\epsilon$ controls the level of self-enhancement and impairment, which is a hyper-parameter in the implementation.
\begin{alignat}{2}
\widehat{A}^L&=\widehat{D}^{-\frac{1}{2}}(\Tilde{A}-\epsilon I) \widehat{D}^{-\frac{1}{2}}=\Tilde{A} - \epsilon \widehat{D}^{-1},&\quad  \widehat{A}^H&=\widehat{D}^{-\frac{1}{2}}(\Tilde{A}+\epsilon I) \widehat{D}^{-\frac{1}{2}}=\Tilde{A} + \epsilon \widehat{D}^{-1}.\label{definition:adjacency}\\  
\Tilde{\Delta}^L&=I_N-\widehat{A}^L = \Tilde{\Delta}+\epsilon \widehat{D}^{-1},&\quad \Tilde{\Delta}^H&=I_N-\widehat{A}^H =\Tilde{\Delta}-\epsilon \widehat{D}^{-1}.
\label{definition:laplacian}
\end{alignat}
Next, with the modified adjacency matrices in the low-pass and high-passes, we have the following layer-wise propagation rule of \textbf{Energy Enhanced Convolution}:
\begin{equation}
    \begin{aligned}
        H^{(l+1)}_{0,J}&=\sigma(\widehat{A}^L\mathcal{W}_{0,J}H^{(l)}W^{(l)}_{0,J})\\
        H^{(l+1)}_{r,j}&=\sigma(\widehat{A}^H\mathcal{W}_{r,j}H^{(l)}W^{(l)}_{r,j}),\quad \hbox{for} (r,j)\in \{(r,j)|r=1,\cdots,n;j=1,\cdots,J\}\\
        H^{(l+1)} &= \mathcal{V}(H^{(l+1)}_{0,J};H^{(l+1)}_{1,1},\cdots,H^{(l+1)}_{n,J})
    \end{aligned}
    \label{modifed_framelet_convolution}
\end{equation}
\paragraph{Dirichlet Energy Enhancement}
The low-pass component $E_{0,J}^{\epsilon}$ and high-pass components $E_{r,j}^{\epsilon}$ of Dirichlet energy with modified Laplacian are defined correspondingly as Eq.~{\ref{modifed_E}}.
\begin{equation}
\begin{aligned}
    E_{0,J}^{\epsilon}(X,)&=(\mathcal{W}_{0,J}X)^{\top} \Tilde{\Delta}^L(\mathcal{W}_{0,J}X)=(\mathcal{W}_{0,J}X)^{\top} (\Tilde{\Delta}+\epsilon \widehat{D}^{-1})(\mathcal{W}_{0,J}X)\\
    E_{r,j}^{\epsilon}(X)&=(\mathcal{W}_{r,j}X)^{\top} \Tilde{\Delta}^H(\mathcal{W}_{r,j}X)=(\mathcal{W}_{r,j}X)^{\top} (\Tilde{\Delta}-\epsilon \widehat{D}^{-1})(\mathcal{W}_{r,j}X)
\end{aligned}
\label{modifed_E}
\end{equation}
The following theorem guarantees that we can obtain a strict enhancement of Dirichlet energy during the feature propagation by Framelet Augmentation.
\begin{theorem}
The total framelet Dirichlet energy is increased with low-pass adjacency matrix $\widehat{A}^L$ and high-pass adjacency matrix $\widehat{A}^H$ when $\epsilon >0$, i.e., $E_{total}^{\epsilon}(X)=\sum_{r=1}^{n}\sum_{j=1}^J E_{r,j}^{\epsilon}(X)+E_{0,J}^{\epsilon}(X)>E_{total}(X) = E(X)$. 
\label{prop:energy_enhance}
\end{theorem}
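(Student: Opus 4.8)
The plan is to reduce the claimed strict inequality to a single \emph{weighted energy gap} between the low-pass and the high-passes, and then to establish that gap from the spectral structure of the tight framelet system.

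First I would expand the modified energies of Eq.~\ref{modifed_E} and peel off the base energy. Since $\Tilde{\Delta}^{L}=\Tilde{\Delta}+\epsilon\widehat{D}^{-1}$ and $\Tilde{\Delta}^{H}=\Tilde{\Delta}-\epsilon\widehat{D}^{-1}$, each pass splits as $E_{0,J}^{\epsilon}(X)=E_{0,J}(X)+\epsilon(\mathcal{W}_{0,J}X)^{\top}\widehat{D}^{-1}(\mathcal{W}_{0,J}X)$ and $E_{r,j}^{\epsilon}(X)=E_{r,j}(X)-\epsilon(\mathcal{W}_{r,j}X)^{\top}\widehat{D}^{-1}(\mathcal{W}_{r,j}X)$. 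Summing over all passes, the $\Tilde{\Delta}$ parts collapse to $E_{total}(X)=E(X)$ by the conservation Proposition~\ref{DE_conservation_1}, leaving
\begin{equation*}
E_{total}^{\epsilon}(X)-E(X)=\epsilon\,\Delta E(X),\qquad \Delta E(X):=(\mathcal{W}_{0,J}X)^{\top}\widehat{D}^{-1}(\mathcal{W}_{0,J}X)-\sum_{r=1}^{n}\sum_{j=1}^{J}(\mathcal{W}_{r,j}X)^{\top}\widehat{D}^{-1}(\mathcal{W}_{r,j}X).
\end{equation*}
As $\epsilon>0$, the theorem is equivalent to $\Delta E(X)>0$, and the multi-channel case follows verbatim with each quadratic form replaced by its trace. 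Because $\widehat{D}^{-1}\succ0$, every summand equals $\|\widehat{D}^{-1/2}\mathcal{W}_{r,j}X\|^{2}$, so $\Delta E(X)>0$ is exactly the assertion that the $\widehat{D}^{-1}$-weighted low-pass norm strictly dominates the total $\widehat{D}^{-1}$-weighted high-pass norm.

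Next I would pass to the Fourier picture. Writing $y=UX$, $P=U\widehat{D}^{-1}U^{\top}\succ0$, and $\mathcal{W}_{r,j}=U^{\top}\Lambda_{r,j}U$ from Eq.~\ref{operators}, the gap becomes $\Delta E(X)=y^{\top}\bigl(\Lambda_{0,J}P\Lambda_{0,J}-\sum_{r,j}\Lambda_{r,j}P\Lambda_{r,j}\bigr)y$ with all $\Lambda$ diagonal. The tight-frame identity underlying Proposition~\ref{DE_conservation_1}, $\Lambda_{0,J}^{2}+\sum_{r,j}\Lambda_{r,j}^{2}=I$, together with the explicit Haar symbols gives the entrywise dominance $\Lambda_{0,J}^{2}\succeq\tfrac12 I\succeq\sum_{r,j}\Lambda_{r,j}^{2}$, since $\cos(\lambda/8)\cos(\lambda/16)>1/\sqrt2$ on the admissible range $\lambda\in[0,2]$. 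This is precisely the mechanism (cf. Remark~\ref{remark_1}) behind the unweighted gap $\sum_{r,j}\|\mathcal{W}_{r,j}X\|^{2}\le\|\mathcal{W}_{0,J}X\|^{2}$ of Appendix~\ref{energy_gap}, which I would reprove first to fix the argument in the unweighted case $P=I$.

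The hard part is that $P$ does not commute with the filters: $\widehat{D}^{-1}$ is diagonal in the node basis while $\Lambda_{r,j}$ is diagonal in the eigenbasis of $\Tilde{\Delta}$, so the entrywise dominance $\Lambda_{0,J}\succeq\Lambda_{r,j}$ does \emph{not} by itself yield the sandwiched matrix inequality $\Lambda_{0,J}P\Lambda_{0,J}\succeq\sum_{r,j}\Lambda_{r,j}P\Lambda_{r,j}$; writing $\Lambda_{0,J}=\Lambda_{r,j}+E_{r,j}$ with $E_{r,j}\succeq0$, the leftover cross terms $E_{r,j}P\Lambda_{r,j}+\Lambda_{r,j}PE_{r,j}$ are symmetric but indefinite. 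To close this I would combine the two-sided bound $(1+d_{\max})^{-1}I\preceq\widehat{D}^{-1}\preceq(1+d_{\min})^{-1}I$ with the spectral gap, reducing $\Delta E(X)>0$ to the quantitative condition $\|\mathcal{W}_{0,J}X\|^{2}/\sum_{r,j}\|\mathcal{W}_{r,j}X\|^{2}\ge(1+d_{\max})/(1+d_{\min})$, and argue it from the strong low-frequency concentration of the framelet low-pass; controlling this ratio against the spread of node degrees is the step I expect to be the main obstacle.
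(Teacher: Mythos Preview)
Your reduction to the weighted gap
\[
\Delta E(X)=\|\widehat{D}^{-1/2}\mathcal{W}_{0,J}X\|^{2}-\sum_{r,j}\|\widehat{D}^{-1/2}\mathcal{W}_{r,j}X\|^{2}>0
\]
via Proposition~\ref{DE_conservation_1} is exactly how the paper proceeds. The divergence is at the next step. The paper's proof (Appendix~\ref{append:enhancement_proof}) writes the gap directly as
\[
\Delta E(X)=X^{\top}U^{\top}\widehat{D}^{-1}\Bigl(\Lambda_{0,J}^{2}-\sum_{r,j}\Lambda_{r,j}^{2}\Bigr)UX
\]
and then invokes the entrywise Haar inequality $\Lambda_{0,J}^{2}\ge\sum_{r,j}\Lambda_{r,j}^{2}$ of Appendix~\ref{energy_gap} to conclude. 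Reaching that display requires precisely the commutation $\Lambda_{r,j}\,(U\widehat{D}^{-1}U^{\top})\,\Lambda_{r,j}=(U\widehat{D}^{-1}U^{\top})\,\Lambda_{r,j}^{2}$ that you flag as ``the hard part'': the paper silently treats $\widehat{D}^{-1}$ and the spectral filters as simultaneously diagonal, which holds only when $\widehat{D}^{-1}$ commutes with $\Tilde{\Delta}$, i.e.\ essentially for regular graphs. So the non-commutativity obstruction you describe is real, but the paper does not confront it --- what you call the main obstacle is an unaddressed step in the published argument, not something you are missing relative to it.

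Your degree-ratio workaround is not in the paper and, as you already suspect, does not recover the statement in full generality: over $\lambda\in[0,2]$ the Haar low/high ratio $\Lambda_{0,J}^{2}(\lambda)/\sum_{r,j}\Lambda_{r,j}^{2}(\lambda)$ is bounded below only by a constant of order $12$ (attained near $\lambda=2$), while $(1+d_{\max})/(1+d_{\min})$ is unbounded across graphs, so for highly irregular graphs and signals concentrated near the top of the spectrum your sufficient condition fails. A rigorous proof under the theorem's stated hypotheses would need either a direct argument that $\Lambda_{0,J}P\Lambda_{0,J}\succeq\sum_{r,j}\Lambda_{r,j}P\Lambda_{r,j}$ for arbitrary $P\succ0$ --- which your cross-term analysis already shows cannot follow from the entrywise ordering alone --- or an additional hypothesis relating $\widehat{D}$ to the spectrum. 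The paper supplies neither.
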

The proof of Theorem~\ref{prop:energy_enhance} is given in Appendix~\ref{append:enhancement_proof}. $\epsilon>0$ indicates strengthening self-connection to the high-passes and weakening that to the low-pass.
\subsection{Computational Complexity}
To reduce the computational complexity caused by eigendecomposition for graph Laplacians, we use Chebyshev polynomials to approximate the framelet decomposition in our implementation. The framelet transform is then equivalent to left-multiplying a specific transformation matrix. We stack the transformation matrices to obtain a tensor-based framelet transform with the computational complexity of $\mathcal{O}(N^2(nJ+1)d)$. $N$ is the number of nodes, $d$ is the feature dimension, $n$ is the number of high-pass filters and $J$ is the scale level of the low-pass. See Appendix~\ref{append:complexity} for an empirical study of the complexity.

\subsection{Asymptotic Behavior of Framelet Components}
We can understand the effect of Framelet Augmentation in terms of the asymptotic behavior of framelet components. Framelet Augmentation helps to increase the weight of the high-frequency information of the node itself during the message passing process. It also reduces the proportion of high-pass component $E^\epsilon_{r,j}$ in the total Dirichlet energy, giving rise to the closer distances between the high-frequency components of node representations. The following proposition implies the asymptotic behaviors of low-pass and high-pass signals during the learning process.

\begin{prop}
    Let $A$ be an $n\times n$ augmented adjacency matrix, which is (symmetric) positive definite. $\lambda_k(A)$ is the $k$-th largest eigenvalue of $A$ ($k=1,2,\cdots,n$). Let $A(\epsilon) = A + \epsilon D$, where $D$ is a positive diagonal matrix. Then $\lambda_k(A(\epsilon))$ increases monotonically with $\epsilon$ and the following relation holds:
    $$\lambda_k(A^L)\leq \lambda_k(A)\leq \lambda_k(A^H)\qquad (\epsilon\geq 0),$$
    where $A^L$ and $A^H$ are low-pass and high-passes adjacency matrices as defined in Eq.~\ref{definition:adjacency}.
    \label{prop:lambda}
\end{prop}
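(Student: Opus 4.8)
The plan is to reduce everything to a single monotonicity statement about the map $\epsilon \mapsto \lambda_k(A + \epsilon D)$ and then read off the chain of inequalities by evaluating it at $-\epsilon$, $0$, and $\epsilon$. First I would identify the matrix $D$ in the statement with $\widehat{D}^{-1}$ from Eq.~\ref{definition:adjacency}: since every augmented degree $1+d_i$ is strictly positive, $D=\widehat{D}^{-1}$ is diagonal with strictly positive entries, hence symmetric positive definite. In this notation $A^H = A + \epsilon D$ and $A^L = A - \epsilon D$, so both are symmetric perturbations of $A$ along the positive-definite direction $D$, and it suffices to control how the $k$-th eigenvalue moves as we slide along this direction.

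The core step is the strict monotonicity. For $\epsilon_2 > \epsilon_1$ I would write $A + \epsilon_2 D = (A + \epsilon_1 D) + (\epsilon_2 - \epsilon_1)D$ and apply Weyl's inequality for the $k$-th eigenvalue of a sum of symmetric matrices, $\lambda_k(M+N) \geq \lambda_k(M) + \lambda_n(N)$, with $M = A + \epsilon_1 D$ and $N = (\epsilon_2 - \epsilon_1)D$. Since $\lambda_n(N) = (\epsilon_2 - \epsilon_1)\,d_{\min} > 0$, where $d_{\min}$ is the smallest diagonal entry of $D$, this yields $\lambda_k(A + \epsilon_2 D) \geq \lambda_k(A + \epsilon_1 D) + (\epsilon_2 - \epsilon_1)d_{\min} > \lambda_k(A + \epsilon_1 D)$, establishing that $f_k(\epsilon) := \lambda_k(A + \epsilon D)$ is strictly increasing on all of $\mathbb{R}$. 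Equivalently, one can argue directly from the Courant--Fischer characterization $\lambda_k = \max_{\dim S = k}\,\min_{x \in S,\,\|x\|=1} x^\top(\cdot)\,x$ by using the optimal $k$-dimensional subspace for $\epsilon_1$ as a test subspace for $\epsilon_2$ and bounding the extra Rayleigh quotient $\epsilon\,x^\top D x$ below by $\epsilon\,d_{\min}$.

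Finally I would specialize $f_k$ at the three values of interest. Monotonicity gives $f_k(-\epsilon) \leq f_k(0) \leq f_k(\epsilon)$ for every $\epsilon \geq 0$, with equality exactly when $\epsilon = 0$; since $f_k(-\epsilon) = \lambda_k(A^L)$, $f_k(0) = \lambda_k(A)$, and $f_k(\epsilon) = \lambda_k(A^H)$, this is precisely the claimed chain $\lambda_k(A^L) \leq \lambda_k(A) \leq \lambda_k(A^H)$.

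The only genuinely delicate point is the strictness (not merely non-strictness) of the monotonicity, because the eigenvector and optimal subspace realizing $\lambda_k$ change with $\epsilon$, so one cannot simply differentiate a fixed quadratic form. Weyl's inequality sidesteps this cleanly by producing a uniform lower bound $(\epsilon_2 - \epsilon_1)d_{\min}$ on the gap that does not reference any particular eigenvector; the positive-definiteness of $D$ (equivalently $d_{\min} > 0$) is exactly what is needed, and is where the hypothesis that $D$ is a positive diagonal matrix enters. I would note that the positive-definiteness of $A$ itself plays no role in the argument beyond fixing the setting.
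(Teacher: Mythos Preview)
Your proposal is correct and takes essentially the same approach as the paper: both arguments rest on the variational characterization of eigenvalues (the paper invokes Courant--Fischer directly, while you package the same idea via Weyl's inequality $\lambda_k(M+N)\ge \lambda_k(M)+\lambda_n(N)$, and even sketch the Courant--Fischer alternative yourself). Your version is in fact tidier, since the explicit lower bound $(\epsilon_2-\epsilon_1)d_{\min}$ makes the strict monotonicity transparent, whereas the paper's one-line Rayleigh-quotient manipulation is somewhat elliptical on that point.
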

See proof of Proposition~\ref{prop:lambda} and empirical study of asymptotic behavior of each pass (Figure~\ref{fig:Distance}) in Appendix~\ref{append:lambda}. According to Theorem~\ref{thm:convergence}, adding $\epsilon I$ as a self-connectivity term in the high-pass increases its second largest eigenvalue of the adjacency matrix, leading to the slower convergence to the subspace and impeding the over-smoothing with an overall enhanced Dirichlet energy.

\subsection{Equivariance of Framelet Convolution}
Equivariance and Invariance are important properties for graph neural networks and we have the following Proposition. 
\begin{prop}
    An EEConv layer is permutation equivariant.
    \label{prop:equivariance}
\end{prop}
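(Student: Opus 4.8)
The plan is to show that for any permutation $\pi$ of the $N$ nodes, represented by a permutation matrix $P$, applying $\pi$ to the input and then running an EEConv layer yields the same result as running EEConv first and then applying $\pi$. Concretely, I would verify that if the input features transform as $X\mapsto PX$ and the graph structure transforms as $\tilde{A}\mapsto P\tilde{A}P^\top$, $\tilde{D}\mapsto P\tilde{D}P^\top$, then the output of Eq.~\ref{modifed_framelet_convolution} transforms as $H^{(l+1)}\mapsto PH^{(l+1)}$. The argument is compositional: it suffices to check permutation equivariance of each building block (framelet decomposition $\mathcal{W}_{r,j}$, the modified adjacency multiplications $\widehat{A}^L,\widehat{A}^H$, the right-multiplication by weights $W^{(l)}_{r,j}$, the entrywise nonlinearity $\sigma$, and the reconstruction $\mathcal{V}$), since equivariance is preserved under composition.

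First I would handle the individual diagonal and adjacency operators. Using $P P^\top = I$, any matrix that is a polynomial in $\tilde{A}$ and $\tilde{D}^{-1}$ conjugates cleanly: under the permutation, $\widehat{D}^{-1}\mapsto P\widehat{D}^{-1}P^\top$ and $\tilde{A}\mapsto P\tilde{A}P^\top$, so from Eq.~\ref{definition:adjacency} we get $\widehat{A}^L\mapsto P\widehat{A}^L P^\top$ and $\widehat{A}^H\mapsto P\widehat{A}^H P^\top$. The key structural fact is that the framelet decomposition operators $\mathcal{W}_{r,j}$ are, by Eq.~\ref{operators}, functions of the graph Laplacian $\Delta$ (equivalently of $\widehat{A}$), and in the Chebyshev-polynomial implementation they are literally polynomials in the (normalized) Laplacian; hence $\mathcal{W}_{r,j}\mapsto P\mathcal{W}_{r,j}P^\top$. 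The same holds for the reconstruction adjoints $\mathcal{W}_{r,j}^\star$ appearing in $\mathcal{V}$.

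Next I would assemble these facts. For each pass, the pre-activation becomes $\widehat{A}^{L/H}\mathcal{W}_{r,j}H^{(l)}W^{(l)}_{r,j}\mapsto (P\widehat{A}^{L/H}P^\top)(P\mathcal{W}_{r,j}P^\top)(PH^{(l)})W^{(l)}_{r,j}$, and the adjacent $P^\top P$ factors cancel, leaving $P\,\bigl(\widehat{A}^{L/H}\mathcal{W}_{r,j}H^{(l)}W^{(l)}_{r,j}\bigr)$. Crucially, right-multiplication by the shared weight $W^{(l)}_{r,j}$ acts only on the feature dimension and is untouched by $P$, and the nonlinearity $\sigma$ is applied entrywise so it commutes with the row permutation $P$, giving $H^{(l+1)}_{r,j}\mapsto PH^{(l+1)}_{r,j}$. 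Finally, since $\mathcal{V}$ is the linear combination $\sum_{r,j}\mathcal{W}_{r,j}^\star(\cdot)$ with each $\mathcal{W}_{r,j}^\star\mapsto P\mathcal{W}_{r,j}^\star P^\top$, and each argument carries a factor $P$ on the left, the $P^\top P$ pairs again cancel and we conclude $H^{(l+1)}\mapsto PH^{(l+1)}$.

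The main obstacle, and the point deserving the most care, is justifying that $\mathcal{W}_{r,j}$ (and $\mathcal{W}_{r,j}^\star$) genuinely conjugate as $P\mathcal{W}_{r,j}P^\top$. The spectral definition via $U$ and $\Lambda$ is written in terms of a fixed eigendecomposition, and eigenvectors are only defined up to sign and up to choices within degenerate eigenspaces, so one cannot naively assume $U\mapsto PU$ with the same $\Lambda$ ordering. The clean resolution is to observe that each $\mathcal{W}_{r,j}$ is a \emph{fixed spectral function} $g_{r,j}(\Delta)$ of the Laplacian (as Eq.~\ref{operators} makes explicit through the filter products), and spectral functions of a matrix are basis-independent and commute with conjugation: $g_{r,j}(P\Delta P^\top)=P\,g_{r,j}(\Delta)\,P^\top$. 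This avoids any ambiguity in $U$ entirely and is made fully rigorous by the Chebyshev-polynomial implementation, where $g_{r,j}$ is literally a polynomial. I would state this spectral-function argument as the central lemma and let the rest follow by the compositional cancellation above.
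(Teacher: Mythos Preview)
Your proposal is correct and follows essentially the same approach as the paper: show that the framelet decomposition and reconstruction operators conjugate as $P\mathcal{W}_{r,j}P^\top$ under a node permutation, so that all the $P^\top P$ factors cancel through the pipeline of Eq.~\ref{modifed_framelet_convolution}. Your treatment is in fact more careful than the paper's, which directly writes $\mathcal{W}^P_{r,j}=PU^\top\Lambda_{r,j}UP^\top$ without discussing eigenvector ambiguity and omits the nonlinearity and weight multiplications; your spectral-function/Chebyshev argument cleanly sidesteps that ambiguity and makes the compositional equivariance explicit.
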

See the proof in Appendix~\ref{appendix:equivariance}. The framelet transforms are naturally generalized from the graph Fourier transform, therefore, framelet decomposition does not destroy the permutation invariance of graph neural networks. In hence, we can stack multiple EEConv layers, followed by a final invariant read-out function to obtain an equivariant deep graph neural network.

\section{Experiments}\label{sec:experiment}
To verify the effectiveness of Framelet Augmentation strategy, we evaluate: (A) Dirichlet energy behavior with respect to the number of layers and homophily level of the graphs and (B) the model's performance of node classification over real-world datasets with different homophily levels and the change of performance with respect to the number of layers.
\subsection{Dirichlet Energy Behavior}
\begin{figure}[htbp]
\vspace{-0.7cm}
\centering
\subfigure[\scriptsize Dirichlet energy on Cora, H=0.81]{
\begin{minipage}[t]{0.32\textwidth}
\centering
\hspace{-0.5cm}
\includegraphics[width=1.1\textwidth]{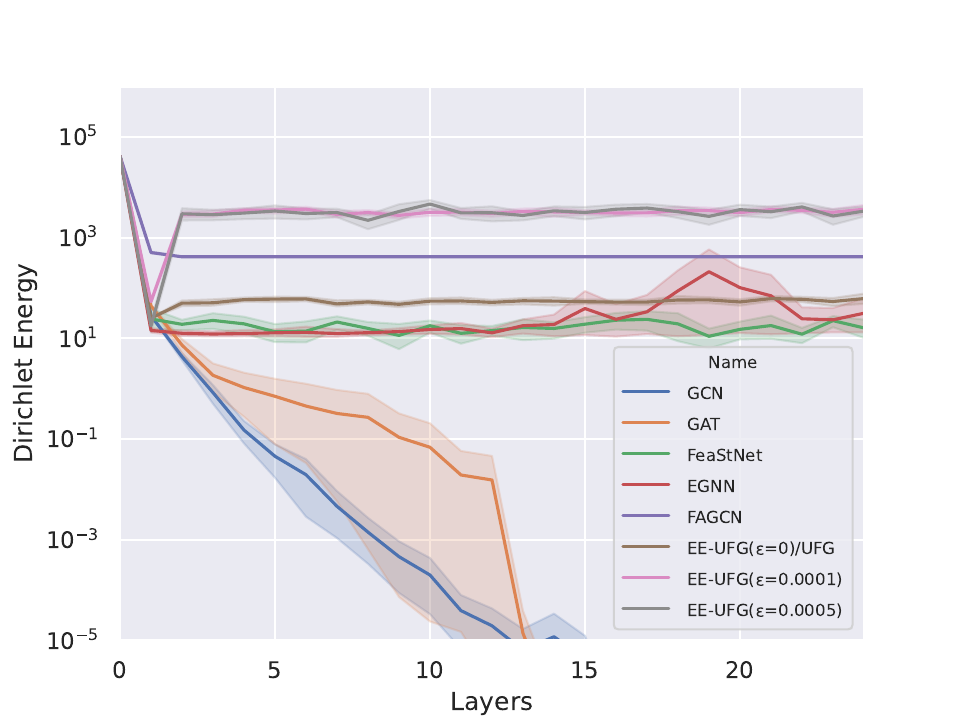}
%\caption{fig1}
\end{minipage}%
}%
\subfigure[\scriptsize Framelet Dirichlet energy]{
\begin{minipage}[t]{0.32\textwidth}
\hspace{-0.5cm}
\includegraphics[width=1.1\textwidth]{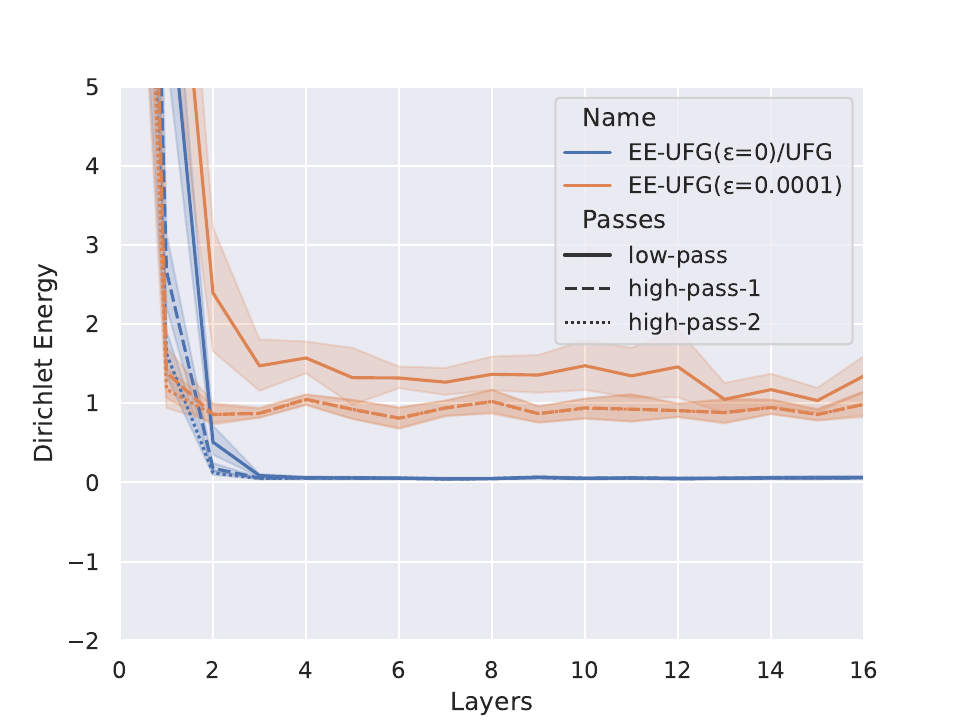}
%\caption{fig2}
\end{minipage}
}%
\subfigure[\scriptsize Dirichlet energy with $p/q$ (\textbf{3} Layers)]{
\begin{minipage}[t]{0.32\textwidth}
\centering
\hspace{-0.5cm}
\includegraphics[width=1.1\textwidth]{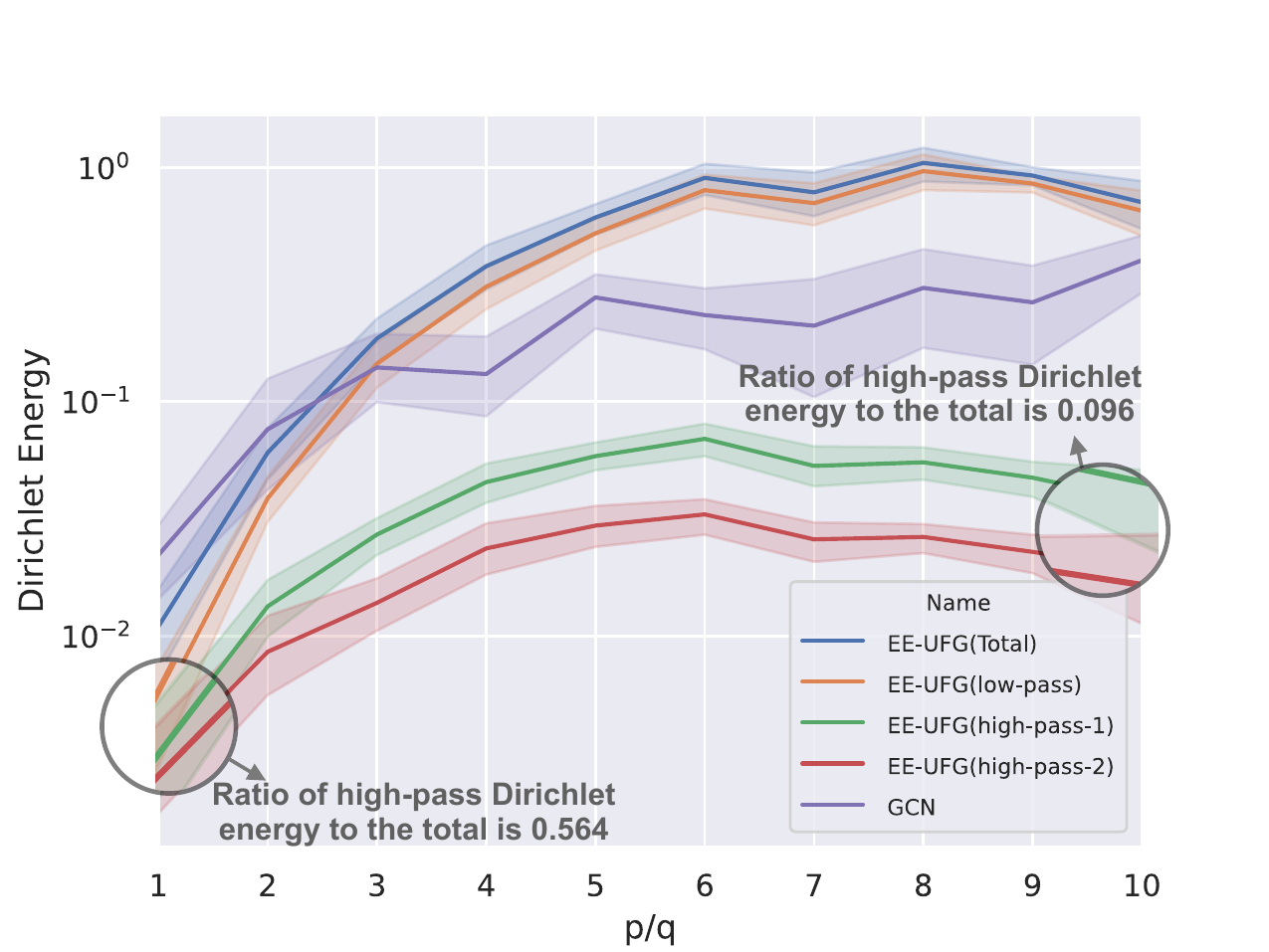}
%\caption{fig1}
\end{minipage}%
}%
\\
\vspace{-0.45cm}
\subfigure[\scriptsize Dirichlet energy on Chameleon, H=0.23]{
\begin{minipage}[t]{0.32\textwidth}
\centering
\hspace{-0.5cm}
\includegraphics[width=1.1\textwidth]{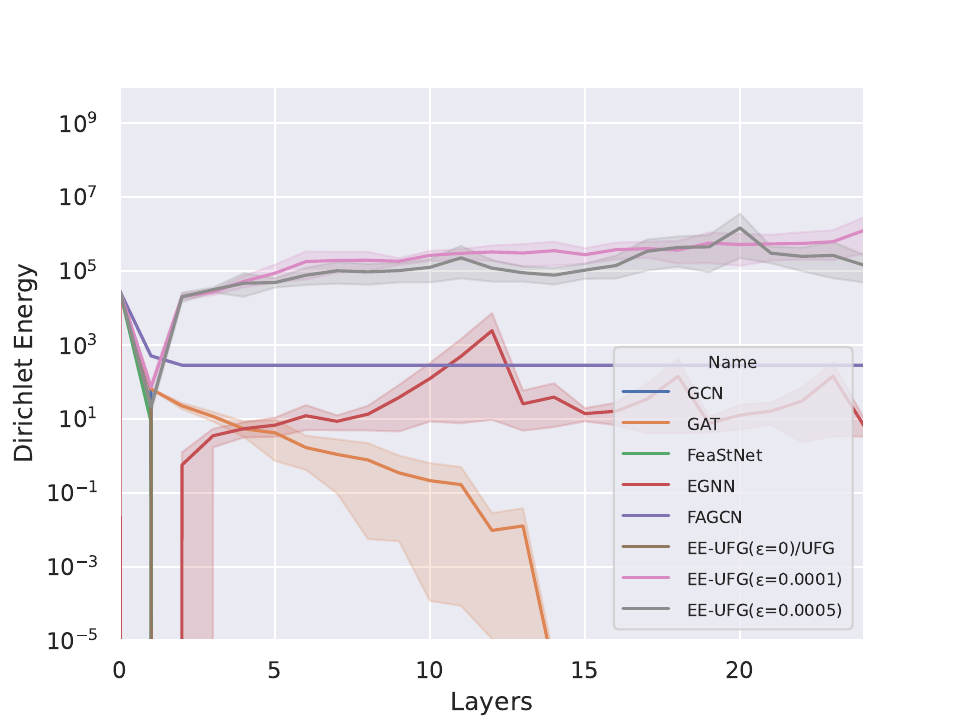}
%\caption{fig1}
\end{minipage}%
}%
\subfigure[\scriptsize Framelet Dirichlet energy proportions]{
\begin{minipage}[t]{0.32\textwidth}
\centering
\hspace{-0.5cm}
\includegraphics[width=1.1\textwidth]{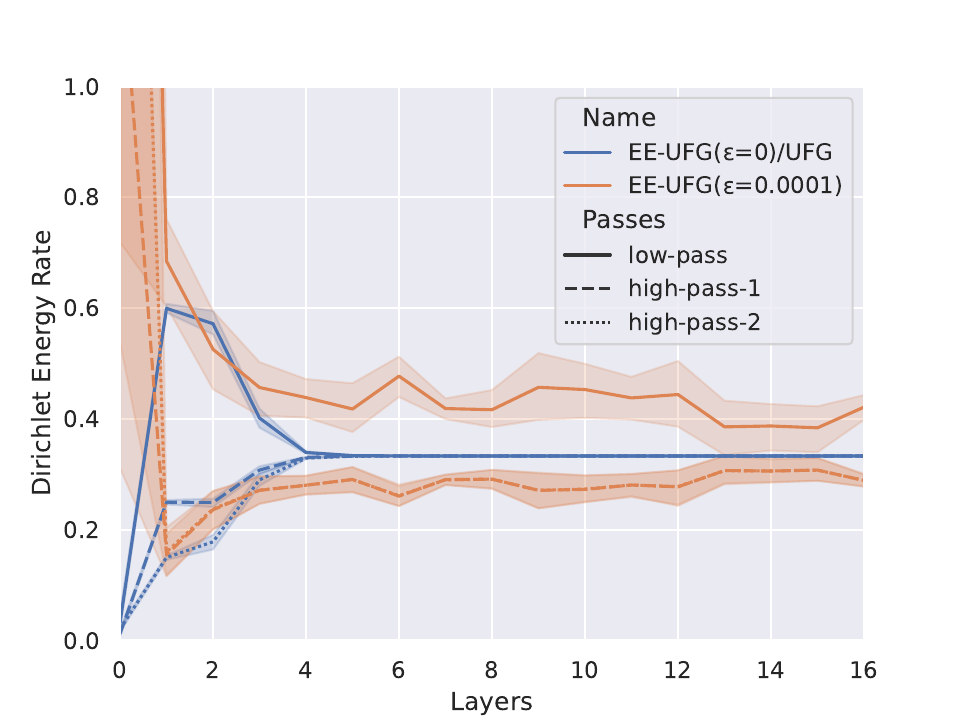}
%\caption{fig1}
\end{minipage}%
}%
\subfigure[\scriptsize Dirichlet energy with  $p/q$ (\textbf{8} Layers)]{
\begin{minipage}[t]{0.32\textwidth}
\centering
\hspace{-0.5cm}
\includegraphics[width=1.1\textwidth]{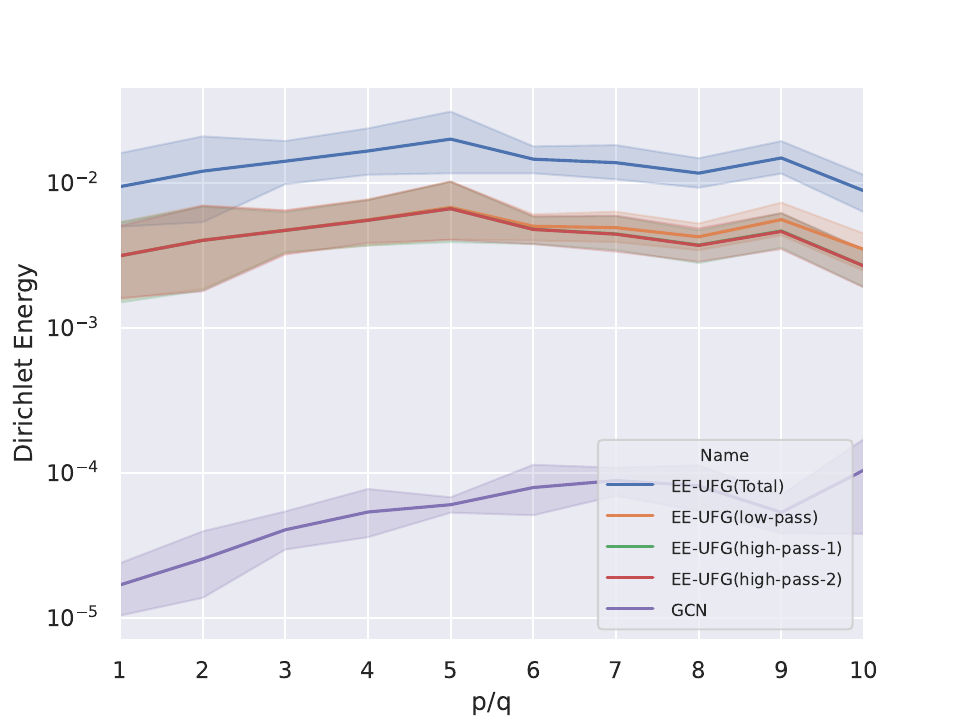}
%\caption{fig1}
\end{minipage}%
}%
\caption{(a) Layer-wise Dirichlet energy with different models on Cora dataset. (b) Layer-wise framelet Dirichlet energy components. (c) Dirichlet energy over graphs with different $p/q$ ratios by 3-layers model. (d) Layer-wise Dirichlet energy with different models on Chameleon dataset. (e) Layer-wise framelet Dirichlet energy ratios to the total Dirichlet energy. (f) Dirichlet energy over graphs with different $p/q$ ratios by 8-layers model.}
\label{fig:DE_Layers}
\vspace{-0.75cm}
\end{figure}
We select two real-world datasets to verify the effect of framelet augmentation for alleviating the exponentially decay of Dirichlet energy: Cora which is a relatively homophilous graph dataset with a homophily level of $0.81$\footnote{We use the homophily level defined in~\cite{pei2020geom}.}, and Chameleon with lower homophily level of $0.23$.  We show the layer-wise (logarithm of) Dirichlet energy during the feature propagation through GCN \cite{kipf2016semi}, GAT \cite{velivckovic2017graph}, FeaStNet \cite{verma2018feastnet}, EGNN~\cite{zhou2021dirichlet}, FAGCN \cite{bo2021beyond} and our Energy-enhanced UFG (EE-UFG) in Figure~\ref{fig:DE_Layers}(a) and Figure~\ref{fig:DE_Layers}(d). When $\epsilon$ is selected as 0, our EE-UFG is equivalent to the spatial version of UFG \cite{zheng2021framelets}. We can observe that the Dirichlet energy usually decays fast to zero with respect to the number of layers in the regular graph convolutional models. In heterophilous graphs, some existing models suffer from Dirichlet energy instability (e.g., EGNN). With framelet augmentation, EE-UFG can lift the Dirichlet energy to a higher and steady state compared with other baseline models that target over-smoothing issues, thus making the output features distinguishable.

In Figure~\ref{fig:DE_Layers}(b) and Figure~\ref{fig:DE_Layers}(e), we plot the absolute value of framelet Dirichlet energy components and the ratios to the total Dirichlet energy with respect to the number of layers. For the case where $\epsilon=0$, the low-pass component and high-passes components quickly decay to zero. One effect of the message-passing convolutions is that the proportions of high-passes and low-pass in the total Dirichlet energy tend to be the same, which means the message-passing mechanism automatically eliminates the energy gap between high-frequency and low-frequency components. However, the EE-UFG with framelet augmentation not only enhances the overall Dirichlet energy, but also decouples the low-pass and high-passes signals and preserves the energy gap during the feature propagation. 

In Figure~\ref{fig:DE_Layers}(c) and Figure~\ref{fig:DE_Layers}(f), we use the Stochastic Block Model (SBM) to randomly generate undirected graphs with 100 nodes that are divided into 2 classes. The node features are sampled from Gaussian distribution $\mathcal{N}(0.5,1)$ and $\mathcal{N}(-0.5,1)$ for two classes. Edges are generated as Bernoulli random variables following intra-class connection probability $p$ and inter-class connection probability $q$. The ratio of $p/q$ depicts the homophily of the graph. The higher the ratio, the more homophilous the graph is. Figure~\ref{fig:DE_Layers}(c) shows the Dirichlet energy with different $p/q$ ratios. When using 3-layer models, which is empirically the optimal number of layers for classic GCN, Dirichlet energy decreases as the homophily decreases. This implies that heterophily and over-smoothing are correlated. Moreover, the ratio of high-passes Dirichlet energy to the total is $0.564$ when $p/q=1$ and decreases to $0.096$ when $p/q=10$. The high passes account for a larger proportion of the total Dirichlet energy in heterophilous graphs compared with homophilous cases, which indicates that high-pass information needs more attention in heterophilous graphs. A deeper network is essential for such graphs, because it creates a larger receptive field during feature propagation and can thus accept more information from the nodes with the same label. From Figure~\ref{fig:DE_Layers}(f), we can observe that in a deeper GNN model, EE-UFG maintains a consistently higher Dirichlet energy with different $p/q$ ratios than GCN. 

\subsection{Node Classification Performance}
\paragraph{Real-world Datasets}We evaluate our proposed models for node classification tasks on nine real-world datasets: Texas, Wisconsin, Cornell introduced by \cite{pei2020geom}, Squirrel, Chalmeleon introduced by \cite{rozemberczki2021multi} and Cora, PubMeb, CiteSeer introduced by \cite{yang2016revisiting} and ogb-arxiv introduced by \cite{wang2020microsoft}. The homophily level ranges from 0.11 to 0.81, which measures the probability of connectivity between nodes with the same label in the graph. We test our model's performance on the public split~\cite{yang2016revisiting} and calculate the average test accuracy and standard deviation. For each dataset, all models are fine-tuned and tested on the same train/validation/test split.

\paragraph{Baselines} We select classic GNNs and state-of-the-art methods for heterophilous graphs and over-smoothing issue as our baselines: (1) classic GNN models: GCN \cite{kipf2016semi}, GAT \cite{velivckovic2017graph}, GraphSAGE \cite{hamilton2017inductive}, UFG \cite{zheng2021framelets}; (2) GNNs that can circumvent over-smoothing: GRAND \cite{chamberlain2021grand}, PairNorm \cite{zhao2019pairnorm}, GCNII \cite{chen2020simple}, EGNN \cite{zhou2021dirichlet}; (3) models for heterophilous graphs: FAGCN \cite{bo2021beyond}, MixHop \cite{abu2019mixhop}. We
use the official codes provided by the authors for all baselines. The hyper-parameter search space for EE-UFG is given in Appendix~\ref{appendix:hyper}. $\epsilon$ is a hyper-parameter in our architecture and we search the parameter space to get the optimal value, which might be different for different tasks. It is also demonstrated in Figure~\ref{fig:DE_Layers}(a) that Dirichlet energy is not sensitive to $\epsilon$.

\begin{table}[htb]
    \centering
    \setlength{\tabcolsep}{0.8mm}
    \resizebox{0.95\linewidth}{!}{
    \begin{tabular}{lcccccccccc}
        \toprule
         & \textbf{Texas}&\textbf{Wisconsin}&\textbf{Squirrel}&\textbf{Chameleon}&\textbf{Cornell}&\textbf{Ogb-arxiv}&\textbf{CiteSeer}&\textbf{PubMed}&\textbf{Cora}&\textbf{Rank}\\ \midrule
         Homophily level& $0.11$& $0.21$ & $0.22$ &$0.23$ &$0.30$ &$0.63$&$0.74$ &$0.80$ &$0.81$ \\ 
         \#Nodes&$183$ &$251$ &$5201$ &$2277$ &$183$ &$169343$& $3327$ &$18717$ & $2708$  \\
         \#Edges& $295$ &$466$ & $198493$ & $31421$ & $280$ & $1166243$ & $4676$ & $44327$ & $5278$\\
         \#Classes& $5$ & $5$ & $5$ & $5$ & $5$ &$40$& $7$& $3$ & $6$\\
         \midrule
         \textbf{GCN} & $55.1${\scriptsize$\pm 5.2$}&$51.8${\scriptsize$\pm 3.1$}& $53.2${\scriptsize$\pm 2.1$}&$64.8${\scriptsize$\pm 2.4$} &$60.5${\scriptsize$\pm 5.3$}&$71.7${\scriptsize$\pm 0.3$}&$71.9${\scriptsize$\pm 1.8$}& $78.7${\scriptsize$\pm 2.9$}&$81.5${\scriptsize$\pm 1.3$}&$6.9$\\
         
         \textbf{GAT}&$52.2${\scriptsize$\pm 6.6$}&$49.4${\scriptsize$\pm 4.1$}&$40.7${\scriptsize$\pm 1.5$}&$60.3${\scriptsize$\pm 2.5$}& $61.9${\scriptsize$\pm 3.1$}&$72.3${\scriptsize$\pm 0.9$}&$71.4${\scriptsize$\pm 1.9$}& $78.7${\scriptsize$\pm 2.3$}& $81.8$ {\scriptsize$\pm 1.3$} &$7.8$\\
         
         \textbf{GraphSAGE}&$\bf{82.4}${\scriptsize$\pm 6.1$}&$81.2${\scriptsize$\pm 5.6$}&$41.6${\scriptsize$\pm 0.7$}&$58.7${\scriptsize$\pm 1.7$}& $76.0${\scriptsize$\pm 5.0$}&$71.5${\scriptsize$\pm 0.3$}&$71.6${\scriptsize$\pm 1.9$}& $77.4${\scriptsize$\pm 2.2$}& $79.2$ {\scriptsize$\pm 7.7$}&$6.7$\\
         
         \textbf{GRAND} &$ 75.7${\scriptsize$\pm3.3$}&$ 79.4${\scriptsize$\pm 3.6$}&$40.1 ${\scriptsize$\pm1.5$}&$54.7${\scriptsize$\pm2.5$}&$\underline{82.2}${\scriptsize$\pm7.1$}&$72.2${\scriptsize$\pm0.2$}&$\underline{74.1}${\scriptsize$\pm 1.7$}& $78.8${\scriptsize$\pm1.7 $}&$83.6${\scriptsize$\pm1.0$}&$5.7$\\
         
         \textbf{PairNorm} &$60.3${\scriptsize$\pm4.3$}&$48.4${\scriptsize$\pm6.1$}&$50.4${\scriptsize$\pm 2.0$}&$62.7${\scriptsize$\pm 2.8$}&$58.9${\scriptsize$\pm 3.2$}&$70.4${\scriptsize$\pm 1.3$}&$73.6${\scriptsize$\pm 1.5$}&$78.3${\scriptsize$\pm 0.4$}&$82.3${\scriptsize$\pm 1.0$}&$7.2$\\
         
         \textbf{GCNII} &$77.5${\scriptsize$\pm3.8$}&$80.4${\scriptsize$\pm3.4$}&$38.5${\scriptsize$\pm 1.6$}&$63.9${\scriptsize$\pm 3.0$}&$77.9${\scriptsize$\pm 3.8$}&$72.5${\scriptsize$\pm 0.3$}&$73.4${\scriptsize$\pm 0.6$}&$\underline{80.3}${\scriptsize$\pm 0.4$}& $\underline{85.5}${\scriptsize$\pm 0.5$}&$4.5$\\
         
         \textbf{EGNN} &$81.0${\scriptsize$\pm0.8$}&$\bf{88.6}${\scriptsize$\pm3.2$}&$48.3${\scriptsize$\pm2.3$}&$62.7${\scriptsize$\pm2.6$}&$\bf{83.8}${\scriptsize$\pm4.6$}&$\underline{72.7}${\scriptsize$\pm 1.2$}&$70.4${\scriptsize$\pm2.8$}& $80.1${\scriptsize$\pm 3.6$}&$\bf{85.7}${\scriptsize$\pm3.7$}&$\underline{3.3}$\\

         \textbf{FAGCN} &$\bf{ 82.4}${\scriptsize$\pm6.9$}&$82.9${\scriptsize$\pm7.9$}&$42.6${\scriptsize$\pm 0.8$}&$55.2${\scriptsize$\pm 3.2$}&$79.2${\scriptsize$\pm 3.2$}&$70.6${\scriptsize$\pm 0.8$}&$72.7${\scriptsize$\pm 0.8$}&$79.4${\scriptsize$\pm 0.3$}& $84.1${\scriptsize$\pm 0.5$}&$5.0$\\

         \textbf{MixHop} &$77.8${\scriptsize$\pm2.5$}&$75.4${\scriptsize$\pm4.9$}&$43.8${\scriptsize$\pm3.4$}&$60.5${\scriptsize$\pm3.5$}&$73.5${\scriptsize$\pm6.3$}&-&$ 71.4${\scriptsize$\pm0.6$}& $\bf{80.8}${\scriptsize$\pm 0.3$}&$81.9${\scriptsize$\pm1.2$}&$6.0$\\

         % \textbf{H2GCN} &$82.2${\scriptsize$\pm5.3$}&$ 85.8${\scriptsize$\pm4.2$}&$37.9${\scriptsize$\pm2.0$}&$59.4${\scriptsize$\pm2.0$}&$82.1${\scriptsize$\pm6.6$}&-&$\bf{76.9}${\scriptsize$\pm 1.7$}&$\bf{89.6}${\scriptsize$\pm0.3$}&$\bf{87.8}${\scriptsize$\pm1.3$} \\
         
         \textbf{UFG} &$79.3${\scriptsize$\pm2.8$}&$78.8${\scriptsize$\pm3.2$}&$\underline{53.3}${\scriptsize$\pm1.5$}&$\underline{66.9}${\scriptsize$\pm1.1$}&$75.3${\scriptsize$\pm1.1$}&$71.9${\scriptsize$\pm 0.1$}&$72.7${\scriptsize$\pm0.6$}&$79.7${\scriptsize$\pm 0.1$}&$83.6${\scriptsize$\pm 0.6$}&$4.4$\\\midrule
         
         \textbf{EE-UFG} (ours)&$\underline{82.3}${\scriptsize$\pm3.2$}&$\underline{85.3}${\scriptsize$\pm3.3 $}&$\bf{55.3}${\scriptsize$\pm1.3$}&$\bf{68.0}${\scriptsize$\pm0.9$}&$\underline{82.2}${\scriptsize$\pm2.8$}&$\bf{73.2}${\scriptsize$\pm 3.8$}&$\bf{74.2}${\scriptsize$\pm1.3$}&$79.4${\scriptsize$\pm0.9$}&$83.5${\scriptsize$\pm0.2$}&$\bf{2.2}$\\
         \bottomrule
    \end{tabular}}
    \caption{Node classification performance comparison. Best result in \textbf{bold} and second best \underline{underlined}. "-" denotes out of memory or inapplicable.}
    \label{tab:11}
    \vspace{-0.2cm}
\end{table}
\begin{table}[htb]
    \centering
    \setlength{\tabcolsep}{0.9mm}
    \resizebox{0.95\linewidth}{!}{
    \begin{tabular}{l|cccc|cccc|cccc|cccc}
        \toprule
         &\multicolumn{4}{c}{\textbf{Chameleon} (H=0.23)}&\multicolumn{4}{c}{\textbf{Cornell} (H=0.30)} &\multicolumn{4}{c}{\textbf{CiteSeer} (H=0.74)}&\multicolumn{4}{c}{\textbf{Cora} (H=0.81)}  \\ \midrule
         \#Layer& 2 & 8 & 16 & 32 & 2 & 8 & 16 & 32 & 2 & 8 & 16 & 32 & 2 & 8 & 16 & 32\\
         \midrule
          \textbf{GCN}&$\bf{63.2}$ & $58.9$ & $50.2$ & $32.4$& $\bf{60.5}$ &$56.4$ & $44.3$ &$28.9$ & $\bf{68.7}$ & $33.6$ & $28.7$& $23.1$& $\bf{81.5}$& $35.8$ & $28.5$ &$ 22.0$  \\
          \textbf{UFG} &$\bf{66.2}$ & $ 58.8$ & $53.4$ & $47.7$& $\bf{74.3}$ &$65.2$ & $58.4$ & $53.5$ & $\bf{71.3}$ &$51.2$ & $46.8$ & $40.4$ &  $75.1$ & $\bf{79.4}$ & $57.1$ &$39.1$ \\
           \textbf{PairNorm}& $\bf{62.4}$ & $54.1$ & $46.4$ & $33.7$&$50.3$ & $\bf{58.4}$ & $57.2$ & $57.9$ & $\bf{73.6}$ &$70.3$ & $58.4$ & $35.8$&  $74.5$ &$81.6$ &$\bf{82.3}$ & $60.3$  \\
           \textbf{GCNII} &$60.7$ & $\bf{62.5}$ & $58.7$ & $42.8$& $67.6$ & $63.2$ & $\bf{77.8}$ & $76.4$ & $68.2$ &$70.6$ & $72.9$ & $\bf{73.4}$&  $82.2$ & $84.2$ & $84.6$ & $\bf{85.4}$  \\ 
           \midrule
           \textbf{EE-UFG}& $66.2$ & $\bf{68.0}$ & $63.5$ & $63.5$&$75.0$ & $\bf{82.2}$ & $81.3$ & $79.2$ & $64.8$ & $73.6$ &$ \bf{73.8}$ &$72.4$&  $83.5$ & $82.4$ & $\bf{83.5}$ & $81.4$  \\
         \bottomrule
    \end{tabular}}
    \caption{Performance comparison for GCN, UFG and EE-UFG with fix number of layers on three citation network datasets. The best result of each model is highlighted in \textbf{Bold}.}
    \label{tab:22}
    \vspace{-0.6cm}
\end{table}

\paragraph{Results}
Table~\ref{tab:11} shows the performance comparison on nine node classification tasks. We can observe that for heterophilous tasks, EE-UFG obtains a great boost compared with baselines, by better extracting the high-frequency information of the node itself. Our model ranks top 2 over seven real-world datasets with $H<0.8$ that are moderately or highly heterophilous. Over-smoothing issue is especially detrimental in heterophilous graph tasks, where multi-hop and deeper GNNs are necessary. In heterophilous graphs, the aggregated information from adjacent nodes contains more high-frequency information, thus, the high-pass components of the node itself should be better focused. Besides, EE-UFG inherits multi-hop aggregation properties from the framelet transform, taking into account all hops in the multi-scale framelet representation, which is essential for heterophilous graphs. The experimental results emphasize our model’s advantage over heterophilous graphs. 

It is known that the performance of GNNs will rapidly decay as the layers are stacked too much. The GCN-row in Table~\ref{tab:22} verifies this phenomenon. We can observe from Table~\ref{tab:22} that the UFG suffers less over-smoothing than classic GCN, partly due to its adaptive filter learning in the frequency domain. Other baselines, such as PariNom, GCNII, alleviate the over-smoothing issue to some extent, which however sacrifices performance, especially for heterophilous graphs. Without Framelet Augmentation, the performance of GNNs may begin to drop before it reaches optimal performance. The effect of Framelet Augmentation here is to delay the performance decay so that it can achieve the best performance with an appropriate number of stacked layers. Our proposed EE-UFG can basically circumvent over-smoothing and achieve better and more stable performance as the number of layers increases. From Table~\ref{tab:22}, we can observe that our model with 32 layers can still perform better than the best performance of other baselines on heterophilous datasets (e.g., Chameleon and Cornell). Besides, we can see that the best performance of EE-UFG occurs at a deeper layer.

\section{Discussion and Extension}\label{discussion}
\paragraph{Framelet Systems on Manifold}
Framelet systems can be well applied to manifold signals, $f\in L^2(\mathcal{M})$. Akin to the graph Laplacian, for a given manifold $\mathcal{M}$, we consider its Laplace-Beltrami operator $\mathcal{L}_B$ which is defined as $\mathcal{L}_Bf = -\hbox{div}(\nabla f)$. $\mathcal{L}_B^L$ and $\mathcal{L}_B^H$ can then be defined similarly as Eq.~\ref{definition:laplacian} respectively. In general, our proposed framelet augmentation method can be naturally extended to any other (symmetric) Laplacian-based propagation rules, using the framelet theory on manifolds. More details about Framelet extension on manifolds are given in Appendix~\ref{appd:extension}.

\paragraph{Limitations} Framelet augmentation is based on a symmetric Laplacian and a symmetric adjacency matrix, which is the general case. However, for some specific cases in geometric deep learning, such as simplicial complexes, non-square boundary matrices are involved to relate the signals between simplices of different dimensions. In such cases, our framelet augmentation can not be implemented. We will consider framelet augmentation strategy for these cases in future work. Besides, the computational complexity of framelet transform is $\mathcal{O}(N^2(nJ+1))$ which is a bit high.

% \paragraph{Sheaf} A cellular sheaf introduced by \citep{JC13}. \citep{hansen2020sheaf} originally designed Sheaf Neural Networks with a fixed sheaf Laplacian. \citep{bodnar2022neural} developed the theory and provided rich experimenta results on a learnable sheaf Laplacian. A cellula sheaf over a graph is a mathematical object associating a space with each node and edge in the graph and a map between these spaces for each incident node-edge pair. Cellular sheaf theory gives the sheaf Laplacian \citep{HGh19} which show the underlying geometry of the graph. One can define accordingly sheaf convolutions and sheaf diffusion process. Using the same formula \eqref{eq: cheby} as the graph, but replacing the graph Laplacian by the sheaf Laplacian $\mathcal{L}_\mathcal{F}$ fwhere sheaf Laplacian is trainable with network parameters. We can define \emph{sheaf framelet convolution} as
% \begin{equation}
%     \begin{aligned}
%         \mathcal{W}_{0,J} &\approx \mathcal{T}_0(2^{-K+J-2}\mathcal{L}_\mathcal{F})\cdots\mathcal{T}_0(2^{-K}\mathcal{L}_\mathcal{F}),\\
%         \mathcal{W}_{r,1} &\approx \mathcal{T}_r(2^{-K}\mathcal{L}_\mathcal{F}),\\
%         \mathcal{W}_{r,j}&\approx \mathcal{T}_r(2^{-K+j-1}\mathcal{L})\mathcal{T}_0(2^{-K+j-2}\mathcal{L}_\mathcal{F})\cdots\mathcal{T}_0(2^{-K}\mathcal{L}_\mathcal{F}).
%     \end{aligned}
% \end{equation}

\section{Related Work}
\paragraph{Over-smoothing and Dirichlet Energy}
One of the widely known plights of GNNs is over-smoothing, which has been studied by \citep{li2019deepgcns,xu2018representation,klicpera2018predict,oono2019graph,nt2019revisiting}. The Dirichlet energy was commonly used in these studies. Explanation paying attention to the structure of Laplacian has been undergone by \citep{li2019deepgcns,luan2019break,oono2019graph,cai2020note}. A large part of the methods come from empirical techniques in graph convolutional layers, like reliving the adjacent matrix by sparsification \citep{rong2019dropedge}, scaling node representations to avoid features caught into the invariant regime~\cite{zhou2021dirichlet}, adding residual connections \citep{chen2020measuring,li2019deepgcns,xu2018representation}. Several other empirical methods have been studied recently, like weight normalization~\cite{zhao2019pairnorm}, edge dropout~\cite{rong2019dropedge}, etc. Many other attempts beyond the graph matrix analysis also emerged like GCON \citep{rusch2022graph} using ODE dynamics, GRAND~\cite{chamberlain2021grand} and PDE-GCN~\cite{eliasof2021pde} regarding GNNs as continuous diffusion processes. Besides, in the field of spectral analysis, GNNs' updating process can be viewed as tackling low-frequency information \citep{wu2019simplifying,bo2021beyond,bo2021beyond}. However, these empirical techniques lack a necessary theoretical guarantee. Another type of method is controlling Dirichlet energy to alleviate the over-smoothing issue, e.g., EGNN. However, Figure~\ref{fig:DE_Layers} (d) shows that EGNN suffers from Dirichlet energy instability in some heterophilous cases. In contrast, to our best knowledge, we are the first to theoretically prove the enhancement of Dirichlet energy, taking advantage of multi-scale graph representation.

\paragraph{Wavelet Analysis on Graphs}
\cite{crovella2003graph} firstly proposed a formal approach to spatial traffic analysis on the wavelet transform. Polynomials of a differential operator were used to build a multi-scale tight frame by \citep{maggioni2008diffusion}. \cite{wang2020tight} gave the tight framelets framework on manifolds, which was then extended to graphs by \citep{dong2017sparse,zheng2022decimated} with the fast decomposition and reconstruction algorithms on undecimated and decimated frames for graph signals. In the regime of signal processing, \cite{mallat1989theory} established a tree-based wavelet system with localization properties, which is a milestone in the multi-resolution analysis. \cite{gavish2010multiscale} apply harmonic analysis to semi-supervised learning and construct Haar-like bases for it. \cite{wang2020haar,li2020fast,zheng2020mathnet} used the Haar-like wavelets system \citep{chui2015representation} to cope with deep learning tasks.

\section{Conclusion}
In this work, we develop a framelet analysis on graphs and generalize the generic graph convolution to a framelet version. Due to the energy difference between the low-pass and high-passes, we originally propose framelet augmentation which is surprisingly discovered to increase the Dirichlet Energy associated with the graph and keep it at a high and steady value. In practice, we demonstrate the behavior of framelet features during the training and the effectiveness of framelet augmentation to relieve the over-smoothing problem. Experimental Results also show that the proposed EE-UFG achieve excellent performance on node classification tasks.

% For natbib users:
\bibliographystyle{unsrtnat}
\bibliography{reference}
% For bibLaTeX users:
% \printbibliography
\newpage 
\appendix

\section{Theoretical Support}
\subsection{Framelet Dirichlet Energy Conservation}\label{DE_proof}
Here, we give the proof of proposition and remark mentioned in Section~\ref{sec:3}.
\begin{prop}
The Dirichlet energy is conserved under framelet decomposition:
\begin{equation}
    E_{total}(X)=E(X).
\end{equation}
\label{DE_conservation_2}
\end{prop}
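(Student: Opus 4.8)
The plan is to reduce the claimed energy identity to a single matrix identity expressed in the shared eigenbasis $U$, and then to invoke the tight-frame (partition-of-unity) property of the filter bank. First I would rewrite each framelet energy component as a quadratic form: since $E_{r,j}(X) = (\mathcal{W}_{r,j}X)^{\top}\Tilde{\Delta}(\mathcal{W}_{r,j}X) = X^{\top}\mathcal{W}_{r,j}^{\top}\Tilde{\Delta}\mathcal{W}_{r,j}X$, and likewise for $E_{0,J}$, summing over all passes gives
\[
E_{total}(X) = X^{\top}\Big(\mathcal{W}_{0,J}^{\top}\Tilde{\Delta}\mathcal{W}_{0,J} + \sum_{r=1}^{n}\sum_{j=1}^{J}\mathcal{W}_{r,j}^{\top}\Tilde{\Delta}\mathcal{W}_{r,j}\Big)X .
\]
Hence it suffices to prove the operator identity $\mathcal{W}_{0,J}^{\top}\Tilde{\Delta}\mathcal{W}_{0,J} + \sum_{r,j}\mathcal{W}_{r,j}^{\top}\Tilde{\Delta}\mathcal{W}_{r,j} = \Tilde{\Delta}$, after which $E_{total}(X)=X^{\top}\Tilde{\Delta}X=E(X)$ is immediate.

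Next I would exploit that $\Tilde{\Delta}$ and all decomposition operators are simultaneously diagonalized by the orthonormal matrix $U$ underlying the framelet construction, so that $UU^{\top}=U^{\top}U=I$, $\Tilde{\Delta}=U^{\top}\Lambda U$, and $\mathcal{W}_{r,j}=U^{\top}\Lambda_{r,j}U$ from Eq.~\ref{operators}. Because the inner diagonal matrices all commute and $UU^{\top}=I$, every sandwiched term collapses, e.g.
\[
\mathcal{W}_{r,j}^{\top}\Tilde{\Delta}\mathcal{W}_{r,j} = U^{\top}\Lambda_{r,j}\,\Lambda\,\Lambda_{r,j}\,U = U^{\top}\Lambda_{r,j}^{2}\Lambda\,U ,
\]
and analogously for the low-pass term. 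Summing, the desired operator identity reduces to the purely diagonal partition-of-unity statement
\[
\Lambda_{0,J}^{2} + \sum_{r=1}^{n}\sum_{j=1}^{J}\Lambda_{r,j}^{2} = I ,
\]
since this would turn the left-hand side into $U^{\top}\Lambda U = \Tilde{\Delta}$. This is exactly the $\Lambda_{r,j}^{2}$ structure already highlighted in Remark~\ref{remark_1}.

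The substantive step is verifying this partition-of-unity identity, which is where the tight-frame structure of the Haar-type filter bank enters. I would establish it by a telescoping argument at the level of a single eigenvalue $\lambda$: writing $P_m := \prod_{i=0}^{m-1}\widehat{a}(2^{-K+i}\lambda)$ for the accumulated low-pass response (with $P_0=1$), Eq.~\ref{operators} gives $\Lambda_{0,J}=P_J$ and $\Lambda_{r,j}=\widehat{b^{(r)}}(2^{-K+j-1}\lambda)\,P_{j-1}$. The filter-bank partition of unity $\sum_r |\widehat{b^{(r)}}(\xi)|^{2} = 1-|\widehat{a}(\xi)|^{2}$ then yields
\[
\sum_{r,j}\Lambda_{r,j}^{2} = \sum_{j=1}^{J}|P_{j-1}|^{2}\big(1-|\widehat{a}(2^{-K+j-1}\lambda)|^{2}\big) = \sum_{j=1}^{J}\big(|P_{j-1}|^{2}-|P_{j}|^{2}\big) = 1 - |P_J|^{2},
\]
using $|P_{j-1}|^2|\widehat{a}(2^{-K+j-1}\lambda)|^2 = |P_j|^2$; adding $\Lambda_{0,J}^{2}=|P_J|^{2}$ gives $1$. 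This is precisely the perfect-reconstruction/Parseval property implicit in the reconstruction operator $\mathcal{V}$ defined after Eq.~\ref{framelet_convolution}. The multi-channel case follows at once by inserting the same operator identity inside the trace, giving $E_{total}(X)=\textup{trace}(X^{\top}\Tilde{\Delta}X)=E(X)$.

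The main obstacle — and the only step requiring genuine care — is the telescoping partition-of-unity identity, where I must keep the cancellation clean across all $J$ scales and confirm that the single-scale condition $|\widehat{a}(\xi)|^{2}+\sum_r|\widehat{b^{(r)}}(\xi)|^{2}=1$ holds for the chosen Haar-type filters. A secondary point I would state explicitly is that the framelet system is built on the same augmented normalized Laplacian $\Tilde{\Delta}$ that defines the Dirichlet energy, so that $U$ simultaneously diagonalizes every operator above; were the framelet built on a different Laplacian, the sandwiched terms would not collapse and the conservation would generically fail.
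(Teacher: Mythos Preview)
Your argument is correct and follows essentially the same route as the paper: diagonalize everything in the common eigenbasis $U$, collapse each sandwiched term to $U^{\top}\Lambda_{r,j}^{2}\Lambda U$, and then invoke the tight-frame identity $\sum_{r,j}\Lambda_{r,j}^{2}=I$. The paper simply asserts the relation $\sum_{r,j}(\lambda_{r,j}^{i})^{2}=1$ at that point, whereas you go further and derive it via the telescoping product $P_m$ from the single-scale filter-bank condition $|\widehat{a}|^{2}+\sum_r|\widehat{b^{(r)}}|^{2}=1$; your caveat that the framelet construction and the Dirichlet energy must share the same Laplacian (so that $U$ genuinely diagonalizes $\Tilde{\Delta}$) is also an assumption the paper uses implicitly in its proof.
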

\textbf{Proof.}
Let $(r,j)\in\{(r,j)|r=1,\cdots,n;j=1,\cdots,J\}\cup\{(0,J)\}$,
\begin{equation*}
    \begin{aligned}
        E_{total}(X) &=
        \sum\limits_{r,j}X^{\top} \mathcal{W}_{r,j}^\top \Tilde{\Delta}\mathcal{W}_{r,j}X\\
        &=\sum\limits_{r,j}X^\top U^\top\Lambda_{r,j}UU^\top\Lambda UU^\top \Lambda_{r,j} UX\\
        &= \sum\limits_{r,j} X^\top U^\top\Lambda^2_{r,j}\Lambda UX\\
        &=\sum\limits_{r,j}\sum\limits_i (UX)_i^2 (\lambda_{r,j}^i)^2\lambda_i\\
        &=\sum\limits_i (UX)_i^2\lambda_i\\
        &= X^\top U^\top\Lambda UX\\
        &= E(X),
    \end{aligned}
\end{equation*}
where $(UX)_i$ is the $i$th component of $UX$, and $\lambda_i,\lambda_{r,j}^i$ are the eigen-values of $\Lambda$, $\Lambda_{r,j}$. The fifth equality is because of the relation $\sum_{r,j} (\lambda_{r,j}^i)^2=1$. $\Lambda_{r,j}$ as follows,
\begin{equation*}
    \begin{aligned}
        \Lambda_{0,J}&=\widehat{a}(2^{-K+J-1} \Lambda)\cdots\widehat{a}(2^{-K}\Lambda),\\
        \Lambda_{r,1}&=\widehat{b^{(r)}}(2^{-K+j-1}\Lambda),\\
        \Lambda_{r,j}&=\widehat{b^{(r)}}(2^{-K+j-1}\Lambda)\widehat{a}(2^{-K+j-2}\Lambda)\cdots\widehat{a}(2^{-K}\Lambda).
    \end{aligned}
\end{equation*}
Therefore, the Dirichlet energy is preserved after the framelet decomposition.
\bs

\begin{remark}
The dirichlet energy components $E_{r,j}(X):= X^\top \mathcal{W}_{r,j}^\top\Tilde{\Delta} \mathcal{W}_{r,j}X$ are controlled by $\Lambda_{r,j}^2$, the diagonal matrix given in Eq.~\ref{operators}, where $(r,j)\in\{(r,j)|r=1,\cdots,n;j=1,\cdots,J\}\cup\{(0,J)\}$.
\label{remark_2}
\end{remark}
\textbf{Proof.}
Using $E_{r,j}(X):= X^\top \mathcal{W}_{r,j}^\top\Tilde{\Delta} \mathcal{W}_{r,j}X$, we obtain
\begin{equation*}
   \min_{i} \{(\lambda_{r,j}^i)^2\}E(X) \le E_{r,j}(X) \le \max_{i} \{(\lambda_{r,j}^i)^2\}E(X)\le 4E,
\end{equation*}
where we use that the eigenvalues of the normalized Laplacian are in the range of $[0,2]$, $\lambda_{r,j}^i$ is the $i$th eigenvalue of $\Lambda_{r,j}$.
\bs

\subsection{Framelet Energy Gap}\label{energy_gap}
In this part, we prove the energy gap between low-pass and high-pass coefficients, with the specific Haar-type filters. We consider $L_2$ norm of feature $X$ as its energy.
\begin{prop}
In the framelet system of 2 scales ($j=1,,2$) and 1 high-pass ($r=1$) with Haar-type filters, the energy of the low-pass is larger than the sum of the energy of the high-passes, i.e. $\|W_1^1\|^2+\|W_2^1\|^2\leq \|V_0\|^2$.
\label{prop:energy_gap}
\end{prop}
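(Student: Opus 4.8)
The plan is to pass everything into the graph Fourier domain, where each framelet transform acts as a diagonal filter, and then to exploit the tight-frame normalization so that the whole inequality collapses onto a single scalar condition on the low-pass filter. Since $U$ is orthogonal and each decomposition operator acts as $U^\top(\cdot)U$ sandwiching a diagonal filter, I would write $Y:=UX$ and set $g_i:=\|Y_i\|^2\ge 0$, the energy carried by the $i$-th Fourier mode (for a single channel, $g_i=(UX)_i^2$). Using Eq.~\ref{coefficients_0} with the stated Haar filter forms, the three squared $L_2$ norms become
\begin{align*}
\|V_0\|^2 &= \sum_{i=1}^N \cos^2\!\big(\tfrac{\lambda_i}{8}\big)\cos^2\!\big(\tfrac{\lambda_i}{16}\big)\, g_i,\\
\|W_1^1\|^2 &= \sum_{i=1}^N \sin^2\!\big(\tfrac{\lambda_i}{8}\big)\cos^2\!\big(\tfrac{\lambda_i}{16}\big)\, g_i,\\
\|W_2^1\|^2 &= \sum_{i=1}^N \sin^2\!\big(\tfrac{\lambda_i}{16}\big)\, g_i.
\end{align*}
The crucial structural point is that all three sums share the \emph{same} nonnegative weights $g_i$.

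Because $g_i\ge 0$, it then suffices to prove the inequality mode-by-mode: for every eigenvalue $\lambda\in[0,2]$ of the normalized Laplacian,
\[
\sin^2\!\big(\tfrac{\lambda}{8}\big)\cos^2\!\big(\tfrac{\lambda}{16}\big) + \sin^2\!\big(\tfrac{\lambda}{16}\big) \;\le\; \cos^2\!\big(\tfrac{\lambda}{8}\big)\cos^2\!\big(\tfrac{\lambda}{16}\big).
\]
Here I would invoke the tight-frame identity $\sum_{r,j}(\lambda^i_{r,j})^2=1$ already used in the proof of Proposition~\ref{DE_conservation_1}; for this system it reads $\cos^2(\tfrac{\lambda}{8})\cos^2(\tfrac{\lambda}{16})+\sin^2(\tfrac{\lambda}{8})\cos^2(\tfrac{\lambda}{16})+\sin^2(\tfrac{\lambda}{16})=1$ (it is just $\cos^2+\sin^2=1$ applied twice). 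Writing the low-pass factor $c(\lambda):=\cos^2(\tfrac{\lambda}{8})\cos^2(\tfrac{\lambda}{16})$, the high-pass sum on the left equals $1-c(\lambda)$, so the pointwise inequality is exactly $1-c(\lambda)\le c(\lambda)$, i.e. $c(\lambda)\ge \tfrac12$.

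It then remains to verify $c(\lambda)\ge\tfrac12$ on $[0,2]$. On this range $\tfrac{\lambda}{8}\in[0,\tfrac14]$ and $\tfrac{\lambda}{16}\in[0,\tfrac18]$ lie inside $[0,\tfrac\pi2)$, so each squared-cosine factor is positive and decreasing; hence $c$ is decreasing and attains its minimum at $\lambda=2$, where $c(2)=\cos^2(\tfrac14)\cos^2(\tfrac18)\approx 0.92>\tfrac12$. Summing the pointwise bound against the weights $g_i$ recovers $\|W_1^1\|^2+\|W_2^1\|^2\le\|V_0\|^2$, completing the argument.

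The approach is essentially routine once the reduction is set up; the one genuinely useful observation is the tight-frame collapse in the middle step, which turns a two-argument trigonometric inequality into the transparent scalar condition $c(\lambda)\ge\tfrac12$ and thereby avoids any messy estimation. The only point requiring care is invoking the correct spectral range $\lambda\in[0,2]$: the scalar bound would fail for arbitrarily large $\lambda$, so the argument genuinely relies on the normalized-Laplacian spectrum being bounded.
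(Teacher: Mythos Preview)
Your proof is correct and follows essentially the same route as the paper: pass to the Fourier domain, reduce to a mode-by-mode scalar inequality in the eigenvalue, and verify it on the normalized-Laplacian spectral range $[0,2]$. Your use of the tight-frame identity to collapse the pointwise inequality to the single condition $c(\lambda)\ge\tfrac12$ is a mild streamlining of the paper's direct check that $\cos^2(\tfrac{\lambda}{8})\cos^2(\tfrac{\lambda}{16})-\sin^2(\tfrac{\lambda}{8})\cos^2(\tfrac{\lambda}{16})-\sin^2(\tfrac{\lambda}{16})\ge 0$, but the substance is identical.
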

\textbf{Proof.}
With the relations that
$\left\{\begin{matrix}\widehat{\alpha}(2\xi)=\widehat{a}(\xi)\widehat{\alpha}(\xi)\\ 
\widehat{\beta}(2\xi) = \widehat{b}(\xi)\widehat{\alpha}(\xi)
\end{matrix}\right.$ and $\left\{\begin{matrix}\widehat{a}(\xi)=\cos(\xi/2)\\ 
\widehat{b}(\xi)=\sin(\xi/2)
\end{matrix}\right.$, we obtain
\begin{equation*}
    \frac{\widehat{\beta}(2\xi)}{\widehat{\alpha}(2\xi)}=\frac{\widehat{b}(\xi)}{\widehat{a}(\xi)}=\tan\bigl(\frac{\xi}{2}\bigr).
\end{equation*} 
As the framelets constitute a tight frame, we have the Parseval identity $\|\widehat{W}_1^1\|^2 + \|\widehat{V}_0\|^2 = \|\widehat{X}\|^2$. Thus, we can obtain the explicit expression of $\widehat{\alpha}$ and $\widehat{\beta}$ as follows,
\begin{equation*}
        \widehat{\alpha}(\Lambda/2)=\cos(\Lambda/{8})\cos(\Lambda/{16}),\quad
        \widehat{\beta}({\Lambda}/{2})=\sin({\Lambda}/{8})\cos({\Lambda}/{16}),\quad
        \widehat{\beta}({\Lambda}/{4})=\sin({\Lambda}/{16}).
\end{equation*}
This implies the energy difference between low-pass and high-passes reads
\begin{equation}\label{eq:energy difference}
    \|\widehat{V}_0\|^2 - \|\widehat{W}_1^1\|^2 - \|\widehat{W}_2^1\|^2 = \|\widehat{X}\|^2\Bigl(\bigl\|\widehat{\alpha}({\Lambda}/{2})\bigr\|^2-\bigl\|\widehat{\beta}({\Lambda}/{2})\bigr\|^2-\bigl\|\widehat{\beta}({\Lambda}/{4})\bigr\|^2\Bigr)
\end{equation}
% $\|\widehat{V}_0\|^2 - \|\widehat{W}_1^1\|^2 - \|\widehat{W}_2^1\|^2 = \|\widehat{X}\|^2\|\widehat{\alpha}(\frac{\Lambda}{2})-\widehat{\beta}(\frac{\Lambda}{2})-\widehat{\beta}(\frac{\Lambda}{4})\|^2$.
The RHS of \eqref{eq:energy difference} equals to
$\hbox{cos}^2(\frac{\Lambda}{8})\hbox{cos}^2(\frac{\Lambda}{16})-\hbox{sin}^2(\frac{\Lambda}{8})\hbox{cos}^2(\frac{\Lambda}{16})-\hbox{sin}^2(\frac{\Lambda}{16})$. Since the eigenvalues of the normalized Laplacian are in the range of $[0,2]$, it can be easily verified that the above trigonometric function is always larger than zero.
This then gives $\|W_1^1\|^2+\|W_2^1\|^2\leq \|V_0\|^2$.
\bs

Figure~\ref{fig:L2_norm} shows the $L_2$ norms of low-pass, high-passes, and the sum of high-passes of datasets with different homophily levels. It empirically verified that there exists an energy imbalance between low and high-passes, which inspires our energy enhancement strategy.

\subsection{Dirichlet Energy Enhancement}\label{append:enhancement_proof}
Next, we show how the Dirichlet energy is enhanced with framelet augmentation.
\begin{prop}
For $\epsilon>0$, the total framelet Dirichlet energy is increased with low-pass adjacency matrix $\widehat{A}^L$ and high-pass adjacency matrix $\widehat{A}^H$, i.e., $E_{total}^{\epsilon}(X)>E_{total}(X) = E(X)$. 
\end{prop}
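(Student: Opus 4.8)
The plan is to reduce the statement to a $\widehat{D}^{-1}$-weighted version of the energy gap in Proposition~\ref{prop:energy_gap}. First I would invoke the conservation identity $E_{total}(X)=E(X)$ from Proposition~\ref{DE_conservation_1} and form the difference $E_{total}^{\epsilon}(X)-E_{total}(X)$ term by term using the definitions in Eq.~\ref{modifed_E}. Since $\Tilde{\Delta}^L=\Tilde{\Delta}+\epsilon\widehat{D}^{-1}$ and $\Tilde{\Delta}^H=\Tilde{\Delta}-\epsilon\widehat{D}^{-1}$ (Eq.~\ref{definition:laplacian}), every occurrence of $\Tilde{\Delta}$ cancels against the corresponding unmodified term of $E_{total}(X)$, leaving only the increments from $\pm\epsilon\widehat{D}^{-1}$. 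Concretely, I expect to arrive at
\begin{equation*}
E_{total}^{\epsilon}(X)-E(X)=\epsilon\Bigl[(\mathcal{W}_{0,J}X)^{\top}\widehat{D}^{-1}(\mathcal{W}_{0,J}X)-\sum_{r=1}^{n}\sum_{j=1}^{J}(\mathcal{W}_{r,j}X)^{\top}\widehat{D}^{-1}(\mathcal{W}_{r,j}X)\Bigr].
\end{equation*}

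Because $\widehat{D}^{-1}$ is a positive diagonal matrix, I would factor it as $\widehat{D}^{-1}=\widehat{D}^{-1/2}\widehat{D}^{-1/2}$ and rewrite each quadratic form as a weighted square norm, so the bracket becomes $\|\widehat{D}^{-1/2}\mathcal{W}_{0,J}X\|^{2}-\sum_{r,j}\|\widehat{D}^{-1/2}\mathcal{W}_{r,j}X\|^{2}$. Since $\epsilon>0$, the theorem reduces to showing this bracket is strictly positive, i.e.\ that the low-pass carries strictly more $\widehat{D}^{-1}$-weighted energy than the sum of the high-passes. This is precisely a reweighted analogue of the gap $\sum_{r,j}\|\mathcal{W}_{r,j}X\|^{2}\le\|\mathcal{W}_{0,J}X\|^{2}$ from Proposition~\ref{prop:energy_gap}.

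The main obstacle is exactly this reweighting. The gap in Proposition~\ref{prop:energy_gap} is an aggregate ($L_2$) statement that rests on the strict pointwise inequality $\widehat{\alpha}(\lambda/2)^{2}-\widehat{\beta}(\lambda/2)^{2}-\widehat{\beta}(\lambda/4)^{2}>0$ on the spectrum $\lambda\in[0,2]$; equivalently it is the operator inequality $\mathcal{W}_{0,J}^{\top}\mathcal{W}_{0,J}\succ\sum_{r,j}\mathcal{W}_{r,j}^{\top}\mathcal{W}_{r,j}$ read in the \emph{spectral} basis $U$ of the Laplacian. The weight $\widehat{D}^{-1}$, however, lives in the \emph{node} basis and does not commute with $U$; it sits between $\mathcal{W}_{r,j}^{\top}$ and $\mathcal{W}_{r,j}$ and mixes spectral coordinates, so the desired inequality does \emph{not} follow from the unweighted gap alone. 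Indeed, unlike the $\Tilde{\Delta}$ case used in Proposition~\ref{DE_conservation_1}, the sum $\sum_{r,j}\mathcal{W}_{r,j}^{\top}\widehat{D}^{-1}\mathcal{W}_{r,j}$ does not telescope to $\widehat{D}^{-1}$ because $\widehat{D}^{-1}$ is not a function of the Laplacian, and the node-wise local gap can be negative.

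To close the argument I would proceed in two steps: (i) extract from the proof of Proposition~\ref{prop:energy_gap} the strict pointwise form of the gap, which together with the tight-frame identity $\mathcal{W}_{0,J}^{\top}\mathcal{W}_{0,J}+\sum_{r,j}\mathcal{W}_{r,j}^{\top}\mathcal{W}_{r,j}=I$ gives $\mathcal{W}_{0,J}^{\top}\mathcal{W}_{0,J}\succeq\tfrac12 I\succeq\sum_{r,j}\mathcal{W}_{r,j}^{\top}\mathcal{W}_{r,j}$; and (ii) control the reweighting. When the graph is degree-regular, $\widehat{D}^{-1}=c\,I$ factors out and the claim is immediate with constant $\epsilon c>0$. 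In general I would bound the bracket below by $d_{\max}^{-1}\|\mathcal{W}_{0,J}X\|^{2}-d_{\min}^{-1}\sum_{r,j}\|\mathcal{W}_{r,j}X\|^{2}$ and close it using a quantitatively strong gap $\sum_{r,j}\|\mathcal{W}_{r,j}X\|^{2}<(d_{\min}/d_{\max})\|\mathcal{W}_{0,J}X\|^{2}$. Guaranteeing this strengthened, degree-aware gap is the genuinely hard step and the one I expect to require the most care.
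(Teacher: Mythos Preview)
Your reduction is exactly the paper's route: expand $E_{total}^{\epsilon}(X)$, cancel the $\Tilde{\Delta}$ parts against $E_{total}(X)=E(X)$ via Proposition~\ref{DE_conservation_1}, and be left with the $\widehat{D}^{-1}$-weighted bracket you display. The paper then passes to the spectral basis and writes the residual as
\[
E_{total}^{\epsilon}(X)-E(X)=\epsilon\, X^\top U^\top \widehat{D}^{-1}\Bigl(\Lambda_{0,J}^2-\sum_{r,j}\Lambda_{r,j}^2\Bigr)UX,
\]
after which it invokes the pointwise gap $\Lambda_{0,J}^2-\sum_{r,j}\Lambda_{r,j}^2\ge 0$ from Proposition~\ref{prop:energy_gap} and stops.

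The non-commutativity issue you flag as the ``main obstacle'' is precisely the step the paper does not justify. To reach the display above, the paper substitutes $\Tilde{\Delta}\pm\epsilon\widehat{D}^{-1}=U^\top(\Lambda\pm\epsilon\widehat{D}^{-1})U$ and then commutes $\widehat{D}^{-1}$ through $\Lambda_{r,j}$; both moves treat $\widehat{D}^{-1}$ as though it were diagonal in the Laplacian eigenbasis, which holds only on degree-regular graphs. So your concern is not a deficiency of your plan relative to the paper---you have isolated a genuine gap in the paper's own argument. The paper offers nothing like your step~(ii); your regular-graph remark is exactly the regime where its computation is rigorous, and the degree-aware strengthening you propose is additional work the paper never attempts.
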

\textbf{Proof.}
\begin{equation*}
    \begin{aligned}
        E_{total}^{\epsilon}(X) &=
        \sum\limits_{r,j}E_{r,j}^{\epsilon}(X) + E_{0,J}^{\epsilon}(X)\\
        &=\sum\limits_{r,j}(\mathcal{W}_{r,j}X)^{\top} (\Tilde{\Delta}-\epsilon \widehat{D}^{-1})(\mathcal{W}_{r,j}X) + (\mathcal{W}_{0,J}X)^{\top} (\Tilde{\Delta}+\epsilon \widehat{D}^{-1})(\mathcal{W}_{0,J}X)\\
        &=\sum\limits_{r,j}X^{\top}U^\top\Lambda_{r,j}UU^\top(\Lambda-\epsilon \widehat{D}^{-1})UU^\top\Lambda_{r,j}UX+X^{\top}U^\top\Lambda_{0,J}UU^\top(\Lambda+\epsilon \widehat{D}^{-1})UU^\top\Lambda_{0,J}UX\\
        &=\left(\epsilon X^\top U^\top \widehat{D}^{-1}\Lambda_{0,J}^2UX - \sum\limits_{r,j}\epsilon X^\top U^\top\widehat{D}^{-1}\Lambda_{r,j}^2UX\right)\\
        &+ \left(X^\top U^\top \Lambda\Lambda_{0,J}^2UX + \sum\limits_{r,j} X^\top U^\top\Lambda\Lambda_{r,j}^2UX\right)\\
        &=\epsilon X^\top U^\top\widehat{D}^{-1}\Bigl(\Lambda_{0,J}^2-\sum\limits_{r,j}\Lambda_{r,j}^2\Bigr)UX+E(X).
    \end{aligned}
    % \label{rebalance_energy}
\end{equation*}
By Proposition~\ref{prop:energy_gap} and its specific framelet system, $\Lambda_{0,J}^2-\sum\limits_{r,j}\Lambda_{r,j}^2\geq 0$, thus, $E^{\epsilon}_{total}(X)\geq E(X)$.
\bs

\subsection{Asymptotic Behavior of EE-UFG}\label{append:lambda}
\begin{prop}
    Let $A$ be an $n\times n$ augmented adjacency matrix, which is (symmetric) positive definite. Let $\lambda_k(A)$ be the $k$-th largest eigenvalue of $A$ ($k=1,2,\cdots,n$), and $A(\epsilon)$ denote $A + \epsilon D$, where $D$ is a positive diagonal matrix. Then, $\lambda_k(A(\epsilon))$ increases monotonically with $\epsilon$ and the following relation holds:
    \begin{equation*}
        \lambda_k(A^L)\leq \lambda_k(A)\leq \lambda_k(A^H)\quad \hbox{for~}\epsilon\geq 0,
    \end{equation*}
    where $A^L$ and $A^H$ are low-pass and high-passes adjacency matrices as defined in Eq.~\ref{definition:adjacency}.
\end{prop}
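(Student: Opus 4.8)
The plan is to reduce the entire statement to the Courant--Fischer min--max characterization of the eigenvalues of a symmetric matrix, together with the single structural fact that $\epsilon D$ is positive (semi)definite whenever $D$ is a positive diagonal matrix and $\epsilon \geq 0$. Since $A$ is symmetric (indeed positive definite) and $D$ is diagonal with strictly positive entries, both $A(\epsilon) = A + \epsilon D$ and the low-pass matrix $A - \epsilon D$ are symmetric, so their eigenvalues are real and the variational principle applies throughout.

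First I would recall the variational formula
$$\lambda_k(M) = \max_{\dim S = k} \min_{\substack{x \in S \\ \|x\| = 1}} x^\top M x$$
for any symmetric $M$, and apply it to $M = A(\epsilon)$ after writing $x^\top A(\epsilon) x = x^\top A x + \epsilon\, x^\top D x$. The monotonicity step then comes from comparing $\epsilon_1 < \epsilon_2$: for any unit vector $x$ one has $x^\top D x \geq \delta := \min_i D_{ii} > 0$, so on each fixed $k$-dimensional subspace the inner minimum increases by at least $(\epsilon_2 - \epsilon_1)\delta$ when passing from $\epsilon_1$ to $\epsilon_2$. Evaluating the outer maximum at the subspace that is optimal for the $\epsilon_1$ problem yields $\lambda_k(A(\epsilon_2)) \geq \lambda_k(A(\epsilon_1)) + (\epsilon_2 - \epsilon_1)\delta$, which proves strict monotonic increase in $\epsilon$. (An equivalent route is Weyl's monotonicity inequality $\lambda_k(A + B) \geq \lambda_k(A) + \lambda_{\min}(B)$ applied with $B = (\epsilon_2 - \epsilon_1) D \succ 0$.)

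Finally I would instantiate $D = \widehat{D}^{-1}$, which is a positive diagonal matrix since each of its entries equals $1/(1+d_i) > 0$. By Eq.~\ref{definition:adjacency} the high-pass matrix is $A^H = A + \epsilon \widehat{D}^{-1} = A(\epsilon)$ and the low-pass matrix is $A^L = A - \epsilon \widehat{D}^{-1} = A(-\epsilon)$. Applying the monotonicity result at $\epsilon \geq 0$ against the baseline $A = A(0)$ gives $\lambda_k(A^H) = \lambda_k(A(\epsilon)) \geq \lambda_k(A)$ and $\lambda_k(A^L) = \lambda_k(A(-\epsilon)) \leq \lambda_k(A)$, which is precisely the claimed sandwich $\lambda_k(A^L) \leq \lambda_k(A) \leq \lambda_k(A^H)$.

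The only genuine obstacle is the bookkeeping in the min--max comparison: the vector attaining the inner minimum depends on $\epsilon$, so one cannot substitute a single optimizer across the two problems. I would circumvent this with the uniform bound $x^\top D x \geq \delta$ on the unit sphere, which decouples the perturbation from the optimizing subspace and lets the inequality survive both the inner minimization and the outer maximization; everything else is a direct invocation of standard spectral inequalities for symmetric matrices.
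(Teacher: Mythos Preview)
Your proof is correct and follows essentially the same approach as the paper: both invoke the Courant--Fischer variational characterization together with the positive definiteness of $\epsilon D$ (equivalently, Weyl's monotonicity inequality) to obtain the eigenvalue ordering. Your treatment is in fact more careful than the paper's, which glosses over the dependence of the optimizer on $\epsilon$ that you explicitly address via the uniform lower bound $x^\top D x \geq \delta$ on the unit sphere.
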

\textbf{Proof.}
By Eq.~\ref{definition:adjacency}, we know that $A^L = \widehat{A}-\epsilon \widehat{D}^{-1}$ and $A^H = \widehat{A}+\epsilon \widehat{D}^{-1}$, where $\widehat{D}^{-1}$ is a positive diagonal matrix. For symmetric matrices, we have the Courant-Fischer min-max theorem:
$$\lambda_k(A) = \hbox{min}\{\hbox{max}\{R_A(x)|x\in U \hbox{ and }x\neq 0\}
| \hbox{ dim}(U)=k\}$$
with $R_A(x) = \frac{\left \langle Ax, x \right \rangle}{\left \langle x,x \right \rangle}$.
We have $R_{A+B}(x) =\frac{\left \langle (A+B)x, x \right \rangle}{\left \langle x,x \right \rangle} = \frac{\left \langle Ax, x \right \rangle}{\left \langle x,x \right \rangle}+\frac{\left \langle Bx, x \right \rangle}{\left \langle x,x \right \rangle}>\hbox{max}\{R_A(x)\}$, if $B$ is positively definite. Thus, we have $\lambda_k(A+\epsilon \widehat{D}^{-1})\geq \lambda_k(A)$. Similarly, $\lambda_k(A-\epsilon \widehat{D}^{-1})\leq \lambda_k(A)$. Therefore, $\lambda_k(A^L)\leq \lambda_k(A)\leq \lambda_k(A^H)$ holds when $\epsilon\geq 0$.
\bs

\begin{figure}
    \centering
    \includegraphics[width=0.75\textwidth]{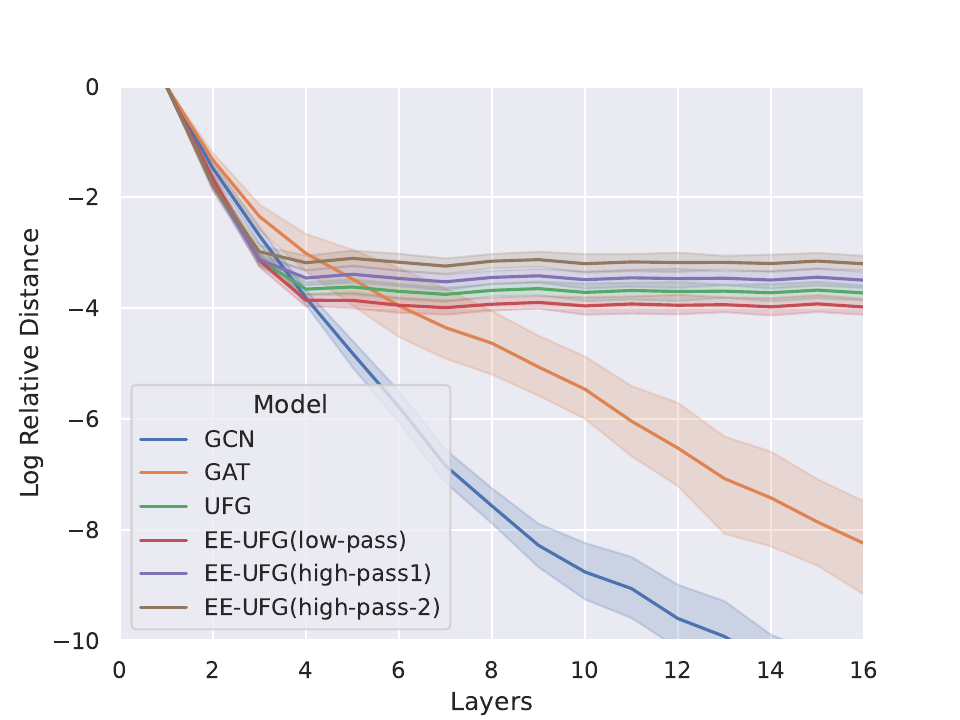}
    \caption{Layer-wise distances from the feature to the subspace $\mathcal{M}$. The result is an average of 100 runs. The Y-axis is the log relative distance, defined by $y^{(l)} = \textup{log}(d_{\mathcal{M}}(X^{(l)})/d_{\mathcal{M}}(X^{(0)}))$. $X^{(0)}$ is the initial feature representation, and $X^{(l)}$ is the output of the $l$-th layer.}
    \label{fig:Distance}
\end{figure}

Figure~\ref{fig:Distance} plots the logarithm of the relative distance from $l$-th layer's output to the subspace $\mathcal{M}$, i.e., $y^{(l)} = \textup{log}(d_{\mathcal{M}}(X^{(l)})/d_{\mathcal{M}}(X^{(0)}))$. The subspace $\mathcal{M}$ is defined by $\mathcal{M}=U\otimes\mathbb{R}^C
=\bigl\{\sum_{m=1}^M e_m\otimes w_m | w_m\in \mathbb{R}^C, e_m\in U\bigr\}\subseteq\mathbb{R}^{N\times C}$, where $U$ is the eigenspace associated with the
smallest eigenvalue (that is, zero) of a (normalized) graph Laplacian $\Delta$. We can observe that the low-pass of EE-UFG converges to a relatively closer distance to the subspace than the high-passes. It can also be predicted by Proposition~\ref{prop:lambda}. The layer-wise outputs of GCN and GAT exponentially approach the subspace $\mathcal{M}$. This subspace is invariant under any polynomial of the Laplacian Matrix, i.e., $\forall x\in \mathcal{M},\; g(\Delta)x\in \mathcal{M}$. It corresponds to the low-frequency part of graph spectra and only carries the information of the connected components of the graph \cite{oono2019graph}.

\subsection{Proof of Equivariance}\label{appendix:equivariance}
\begin{prop}
    An EEConv layer is permutation-equivariant.
    \label{prop:equivariance}
\end{prop}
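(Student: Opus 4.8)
The plan is to fix an arbitrary permutation matrix $P\in\{0,1\}^{N\times N}$ and track how each ingredient of the EEConv update in Eq.~\ref{modifed_framelet_convolution} transforms when the node ordering is relabelled by $P$, i.e. when the input signal becomes $PH^{(l)}$ and the graph is replaced by its relabelling. Equivariance then reduces to showing that the layer output becomes $PH^{(l+1)}$. Since $P$ is orthogonal ($P^\top P=I$), relabelling sends $\Tilde{A}\mapsto P\Tilde{A}P^\top$ and the diagonal degree matrix $\widehat{D}\mapsto P\widehat{D}P^\top$; a short computation using $PIP^\top=I$ then gives $\widehat{A}^L\mapsto P\widehat{A}^L P^\top$ and $\widehat{A}^H\mapsto P\widehat{A}^H P^\top$ directly from the definitions in Eq.~\ref{definition:adjacency}.

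The crux is the framelet operators. First I would observe that every $\mathcal{W}_{r,j}=U^\top\Lambda_{r,j}U$ is a spectral function of the augmented normalized Laplacian $\Tilde{\Delta}$ (and, in the implementation, literally a Chebyshev polynomial in $\Tilde{\Delta}$ or $\widehat{A}$). Because any matrix function, and in particular any polynomial, commutes with orthogonal conjugation, one obtains $\mathcal{W}_{r,j}\mapsto P\mathcal{W}_{r,j}P^\top$, and likewise the reconstruction kernels transform as $\mathcal{W}_{r,j}^\star\mapsto P\mathcal{W}_{r,j}^\star P^\top$. Here I would stress that the argument should be made at the level of the spectral function rather than through $U$ directly: the eigenbasis $U$ is only defined up to intra-eigenspace rotations and sign flips and does not satisfy a clean $U\mapsto PU$ rule, whereas the canonical object $f(\Tilde{\Delta})$ does transform by conjugation.

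Next I would assemble the three branches. The trainable weights $W^{(l)}_{r,j}$ multiply on the right and act only on the feature axis, so they are untouched by $P$; the activation $\sigma$ is applied entrywise, hence commutes with the row permutation $P$, giving $\sigma(PM)=P\sigma(M)$. Combining these facts and repeatedly collapsing adjacent $P^\top P=I$ factors, each pass satisfies $H^{(l+1)}_{r,j}\mapsto PH^{(l+1)}_{r,j}$ and $H^{(l+1)}_{0,J}\mapsto PH^{(l+1)}_{0,J}$. Since the reconstruction is a sum of terms $\mathcal{W}_{r,j}^\star H^{(l+1)}_{r,j}$ over all passes (including the low-pass), applying $\mathcal{W}_{r,j}^\star\mapsto P\mathcal{W}_{r,j}^\star P^\top$ and collapsing one more $P^\top P$ per term yields $H^{(l+1)}\mapsto PH^{(l+1)}$, which is the claim.

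I expect the only genuinely nontrivial step to be the equivariance of the framelet transform, i.e. justifying $\mathcal{W}_{r,j}\mapsto P\mathcal{W}_{r,j}P^\top$ (and the same for $\mathcal{W}_{r,j}^\star$) without appealing to a non-unique eigenbasis; the cleanest route is to invoke the Chebyshev-polynomial form used in implementation so that $\mathcal{W}_{r,j}$ is manifestly a polynomial in a permutation-equivariant operator. Everything else is routine bookkeeping that relies on $P^\top P=I$ and the entrywise nature of $\sigma$.
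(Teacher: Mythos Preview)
Your proposal is correct and follows essentially the same route as the paper: both track how $\widehat{A}^L$, $\widehat{A}^H$, and the framelet operators transform under conjugation by $P$, then collapse $P^\top P=I$ through each pass and the reconstruction. Your treatment of $\mathcal{W}_{r,j}\mapsto P\mathcal{W}_{r,j}P^\top$ via the spectral/Chebyshev-polynomial viewpoint is in fact more careful than the paper's, which simply writes $\mathcal{W}_{r,j}^P=PU^\top\Lambda_{r,j}UP^\top$ without addressing the eigenbasis ambiguity you flag.
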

\begin{proof}
Let $X\in \mathbb{R}^{n\times d}$ be the node features, the decomposition $\mathcal{W}=[\mathcal{W}_{0,2};\mathcal{W}_{1,1};\mathcal{W}_{1,2}]$, the augmented adjacency matrix $\mathbf{A}=[A^L; A^H; A^H]$, $P$ is the permutation matrix that is applied to the node features, and $\mathcal{W}^P$ the framelet transform of the permuted graph. Then, the following holds for any permutation matrix $P\in \mathbb{R}^{n\times n}$.
\begin{equation*}
    \mathbf{PAP^\top}\mathcal{W}^P PX=\begin{bmatrix}PA^LP^\top\mathcal{W}_{0,2}^P PX\\ PA^HP^\top\mathcal{W}_{1,1}^P PX\\ PA^HP^\top\mathcal{W}_{1,2}^P PX\end{bmatrix}
    =\begin{bmatrix}PA^LP^\top PU^\top\Lambda_{0,2}UP^\top PX\\ PA^HP^\top PU^\top\Lambda_{1,1}UP^\top PX\\ PA^HP^\top PU^\top\Lambda_{1,2}UP^\top PX\end{bmatrix}
    % =\begin{bmatrix}PA^L \mathcal{W}_{0,2}X\\ PA^H \mathcal{W}_{1,1}X\\ PA^H \mathcal{W}_{1,2}X\end{bmatrix}
    =\mathbf{P} \mathbf{A} \mathcal{W}X,
\end{equation*}
where $U$ is the graph Fourier transform, $\Lambda_{r,j}$ is defined in Eq.~\ref{operators}. Similarly, framelet reconstruction is permutation equivariant. Let $f$ be the function that represents EEConv in Eq.~\ref{modifed_framelet_convolution}, thus, for any permutation matrix $P$, we have $f(PH^{(l)})=Pf(H^{(l)})$, i.e., $f$ is permutation equivariant.
\end{proof}

\section{An Example: Enhancing Energy for Sheaf Convolution}\label{appd:extension}
Framelet systems can be well applied to manifold signals, $f\in L^2(\mathcal{M})$. Akin to the graph Laplacian, for a given manifold $\mathcal{M}$, we consider its Laplace-Beltrami operator $\mathcal{L}_B$ which is defined as $\mathcal{L}_Bf = -\hbox{div}(\nabla f)$. $\mathcal{L}_B^L$ and $\mathcal{L}_B^H$ can then be defined similarly as Eq.~\ref{definition:laplacian} respectively. In general, our proposed framelet augmentation method can be naturally extended to any other (symmetric) Laplacian-based propagation rules, using the framelet theory on manifolds.

The gradient operator $\nabla$ maps $x\in L^2(\mathcal{M})$ to its associated tangent plane $T_x(X)\in L^2(T\mathcal{M})$.

To obtain the discrete version, we can sample the manifold $\mathcal{M}$ at $N$ points, using polynomial-exact quadrature rules, like Gauss–Legendre quadrature sampling method. With these points, we can construct a set of triangular meshes ($V,E,F$). The edge connection between $i$ and $j$ indicates that $(i,j)\in E$ is shared by two triangular meshes, as originally proposed by \cite{bronstein2017geometric}. 
Using the same formula for the graph, but replacing the graph Laplacian by the Laplace-Beltrami operator $\mathcal{L}_B$, we can define \emph{Manifold framelet transforms} as
\begin{equation}
    \begin{aligned}
        \mathcal{W}_{0,J} &\approx \mathcal{T}_0(2^{-K+J-2}\mathcal{L}_B)\cdots\mathcal{T}_0(2^{-K}\mathcal{L}_B),\\
        \mathcal{W}_{r,1} &\approx \mathcal{T}_r(2^{-K}\mathcal{L}_B),\text{ and  }
        \mathcal{W}_{r,j}\approx \mathcal{T}_r(2^{-K+j-1}\mathcal{L})\mathcal{T}_0(2^{-K+j-2}\mathcal{L}_B)\cdots\mathcal{T}_0(2^{-K}\mathcal{L}_B).
    \end{aligned}
\end{equation}
\subsection{Diffusion Problems on Manifold}
The Laplace-Beltrami operator is closely related to the diffusion process over the manifold, which is governed by the PDE
$$\dot{f}(x,t) =-\mathcal{L}_Bf(x,t),\qquad f(x,0)=f_0(x), $$
where $f(x,t)$ is the signal at point $x$ at time $t$, $f_0(x)$ is the initial condition at point $x$. In the discrete setting, let $X^{(t)}\in \mathbb{R}^{N\times d}$ denote the feature representation at time point $t$. The following propagation rule can be used to approximate the continuous diffusion process on the manifold:
$$X^{(t+1)} =X^{(t)}-\mathcal{L}_BX^{(t)} = (I-\mathcal{L}_B)X^{(t)}.$$
Similar to \eqref{modifed_framelet_convolution}, a framelet manifold convolution for the diffusion process can be derived as follows.
For $r=1,\cdots,n$ and $j=1,\cdots,J$,
\begin{equation}
    \begin{aligned}
        H^{(l+1)}_{0,J}&=\sigma((I-\mathcal{L}_B^L)\mathcal{W}_{0,J}H^{(l)}W^{(l)}_{0,J}),\qquad
        H^{(l+1)}_{r,j}=\sigma((I-\mathcal{L}_B^H)\mathcal{W}_{r,j}H^{(l)}W^{(l)}_{r,j});\\
        H^{(l+1)} &= \mathcal{V}(H^{(l+1)}_{0,J};H^{(l+1)}_{1,1},\cdots,H^{(l+1)}_{n,J}),
    \end{aligned}
    \label{modifed_manifold_convolution}
\end{equation}
\subsection{Sheaf Laplacian}
 Here we discuss Sheaf Lapacians as an example of the generalization of our method to general manifolds. The definition of the framelet system on the sheaf, which we name as \textbf{\emph{sheaflets}}, is very similar to that on the graph but with a sheaf Laplacian which contains tunable parameters. Sheaflets can then be used to define \emph{sheaflet convolution} like \eqref{framelet_convolution} and then the enhanced sheaflet convolution as \eqref{modifed_framelet_convolution}. The latter can be proved to follow the same energy enhancement as the graph framelet convolution.

A cellular sheaf defined over a graph assigns each node and each edge a vector space and introduces a linear map between the associated spaces of each node-edge pair. In the mathematical language, a cellular sheaf  $\mathcal{F}$ on an undirected graph $\mathcal{G}$ is given by 
\begin{enumerate}
    \item a vector space $\mathcal{F}(v)$ for each vertex $v$ of $\mathcal{G}$,
    \item a vector space $\mathcal{F}(e)$ for each edge $e$ of $\mathcal{G}$,
    \item a linear map $\mathcal{F}_{v\triangleleft e}$: $\mathcal{F}(v)\rightarrow\mathcal{F}(e)$ for each incident vertex-edge pair $v\triangleleft e$ of $\mathcal{G}$.
\end{enumerate}
Construct the \textit{Sheaf Laplacian} $L_{\mathcal{F}}: C^0(\mathcal{G},\mathcal{F})\rightarrow C^0(\mathcal{G},\mathcal{F})$, where the diagonal blocks are $L_{\mathcal{F}_{vv}} = \sum_{v\triangleleft e}\mathcal{F}_{v\triangleleft e}^{\top}\mathcal{F}_{v\triangleleft e}$ and the non-diagonal blocks $L_{\mathcal{F}_{vu}}=-\mathcal{F}_{v\triangleleft e}^{\top}\mathcal{F}_{u\triangleleft e}$. Compared with the graph Laplacian, sheaf Laplacian is a $Nd\times Nd$ matrix, consisting of a class of linear operators over the graph, thus allowing the more underlying geometric and algebraic structure of the graph. $N$ is the number of nodes of $\mathcal{G}$, $d$ is the dimension of the stalks that associated to each node.

\subsection{Sheaflets} 
Let $\{(u_l,\lambda_l)\}_{l=1}^{Nd}$ the eigen-pair for the sheaf Laplacian $L_{\mathcal{F}}$ on $l_2(\mathcal{G}).$ For $j\in \mathbb{Z}$ and $p\in V,$ the \textit{undecimated sheaflets} $\phi_{j,p}(v)$ and $\psi_{j,p}^r(v), v\in V$ at scale $j$ are \textit{filtered Bessel kernels}
\begin{equation}\label{kernel}
\begin{aligned}
    &\phi_{j,p}(v):= \sum_{l=1}^{Nd} \widehat{\alpha}\left(\frac{\lambda_l}{2^j}\right ) \overline{u_l(p)}u_l(v),\\
    &\psi_{j,p}^r(v):= \sum_{l=1}^{Nd} \widehat{\beta}\left(\frac{\lambda_l}{2^j}\right ) \overline{u_l(p)}u_l(v), \quad r = 1,\dots,n.
\end{aligned}
\end{equation}
Here, $j$ and $p$ in $\phi_{j,p} (v)$ and $\psi_{j,p}^r(v)$ indicate the “dilation” at scale $j$ and the “translation” at a vertex $p\in V.$ $\alpha(\cdot), \beta(\cdot)$ are the scaling functions as defined in Section~\ref{sec:framelets}. Let $J,J_1,J>J_1$ be two integers. An \textit{undecimated sheaflet system} ${\rm UFS}(\Psi,\eta;\mathcal{G})$ (starting from
a scale $J_1$) as a non-homogeneous, stationary affine system:
\begin{equation}\label{ufs}
    \begin{aligned}
        {\rm UFS}_{J_1}^J (\Psi,\eta)&= {\rm UFS}_{J_1}^J(\Psi,\eta;\mathcal{G})\\
        &:=\{\phi_{J_1,p}:p\in V \} \cup \{\psi_{j,p}:p\in V, j=J_1,\dots ,J\}_{r=1}^n.
    \end{aligned}
\end{equation}
The system ${\rm UFS}_{J_1}^J(\Psi,\eta)$ is then called an \textit{undecimated tight frame} for $l_2(\mathcal{G})$ and the elements in ${\rm UFS}_{J_1}^J(\Psi,\eta)$ are called \textit{undecimated tight sheaflets} on $\mathcal{G}$.

The \textit{sheaflet coefficients} $V_0, W_j^r \in \mathbb{R}^{Nd\times f}$ are defined as the inner-product of the sheaflet and the sheaf signal $X\in \mathbb{R}^{Nd\times f}$, where $f$ denotes the feature dimension. The size of $V_0$ and $W_j^r$ is the same as the sheaf signal $X$. Then,
\begin{equation}
    V_0 =\left \langle\phi_{0,\cdot}, X\right \rangle = U^\top \widehat{\alpha}\bigl(\frac{\Lambda}{2}\bigr)UX\hbox{~~and~~} W_j^r= \left \langle\psi_{j,\cdot}^r, X\right \rangle=U^\top\widehat{\beta^{(r)}}\bigl(\frac{\Lambda}{2^{j+1}}\bigr)UX,
    \label{coefficients}
\end{equation}
where the scaling functions on $\mathcal{G}$ are as follows,
\begin{equation*}
    \widehat{\alpha}\bigl(\frac{\Lambda}{2^{j+1}}\bigr)= \mathrm{diag}\left(\widehat{\alpha}\bigl(\frac{\lambda_1}{2^{j+1}}\bigr),\cdots,\widehat{\alpha}\bigl(\frac{\lambda_{Nd}}{2^{j+1}}\bigr)\right),\quad
    \widehat{\beta^{(r)}}\bigl(\frac{\Lambda}{2^{j+1}}\bigr)
    =\mathrm{diag}\left(\widehat{\beta^{(r)}}\bigl(\frac{\lambda_1}{2^{j+1}}\bigr),\cdots,\widehat{\beta^{(r)}}\bigl(\frac{\lambda_{Nd}}{2^{j+1}}\bigr)\right).
\end{equation*}
\subsection{Implementation Format}
To reduce the computational complexity caused by eigendecomposition for Sheaf Laplacians, we use Chebyshev polynomials to approximate. Consider Chebyshev polynomials $\mathcal{T}_0,\cdots, \mathcal{T}_n$ of fixed degree $t$, and filter $a\approx \mathcal{T}_0$ and $b^{(r)}\approx\mathcal{T}_r$, then the above \ref{operators} can be approximated
\begin{equation*}
    \begin{aligned}
        \mathcal{W}_{0,J} &\approx U^\top\mathcal{T}_0(2^{-K+J-1}\Lambda)\cdots\mathcal{T}_0(2^{-K}\Lambda)U=\mathcal{T}_0(2^{K+J-2}L_{\mathcal{F}})\cdots\mathcal{T}_0(2^{-K}L_{\mathcal{F}}),\\[1mm]
        \mathcal{W}_{r,1} &\approx U^\top\mathcal{T}_r(2^{-K}\Lambda)U=\mathcal{T}_r(2^{-K}L_{\mathcal{F}}),\\[1mm]
        \mathcal{W}_{r,j}&\approx U^\top\mathcal{T}_r(2^{-K+j-1}\Lambda)\mathcal{T}_0(2^{-K+j-2}\Lambda)\cdots\mathcal{T}_0(2^{-K}\Lambda)U\\
        &=\mathcal{T}_r(2^{K+j-1}L_{\mathcal{F}})\mathcal{T}_0(2^{K+j-2}L_{\mathcal{F}})\cdots\mathcal{T}_0(2^{-K}L_{\mathcal{F}}).
    \end{aligned}
\end{equation*}
$\mathcal{L}_F$ is the sheaf Laplacian.

\subsection{From Sheaf Convolution to Sheaflet Convolution}
Sheaf convolution \cite{bodnar2022neural, hansen2020sheaf} is defined as follows,
\begin{equation}\label{eq:sheaf conv}
    Y = \sigma((I_{Nd}-L_\mathcal{F})(I_N\otimes  W_1)XW_2)\in \mathbb{R}^{Nd\times f_2},
\end{equation}
where $(I_N\otimes W_1)XW_2 = \Tilde{X}\in \mathbb{R}^{Nd\times f_2}$. Inheriting the characteristics of sheaf convolution in Eq.~\eqref{eq:sheaf conv}, we define \emph{Sheaflet Convolution} based on the sheaf framelet system as
\begin{equation}
    \begin{aligned}
        Y_{0,J}&=\sigma((I_{Nd}-\mathcal{L}_F)(I_N\otimes  W_1)\mathcal{W}_{0,J}XW_{0,J}),\\
        Y_{r,j}&=\sigma((I_{Nd}-\mathcal{L}_F)(I_N\otimes  W_1)\mathcal{W}_{r,j}XW_{r,j}),\\
        Y &= \mathcal{V}(Y_{0,J};Y_{1,1},\cdots,Y_{n,J}).
    \end{aligned}
    \label{sheaflet_convolution}
\end{equation}

\subsection{Dirichlet Energy for Sheaflets}
Applying our Dirichlet energy enhancement strategy to Sheaflet Convolution (Eq.~\ref{sheaflet_convolution}), we obtain the following \emph{Energy Enhanced Sheaflet Convolution},
\begin{equation}
    \begin{aligned}
        Y_{0,J}&=\sigma((I_{Nd}-\mathcal{L}_F^L)(I_N\otimes  W_1)\mathcal{W}_{0,J}XW_{0,J}),\\
        Y_{r,j}&=\sigma((I_{Nd}-\mathcal{L}_F^H)(I_N\otimes  W_1)\mathcal{W}_{r,j}XW_{r,j}),\\
        Y &= \mathcal{V}(Y_{0,J};Y_{1,1},\cdots,Y_{n,J}),
    \end{aligned}
    \label{modifed_sheaflet_convolution}
\end{equation}
where $\mathcal{L}_F^L$ and $\mathcal{L}_F^H$ are defined similarly as Eq.~\ref{definition:laplacian}.

The sheaflet Dirichlet energy with modified sheaf Laplacian $(\epsilon>0)$ is defined as 
\begin{equation}\label{eq:ee sheaflet}
\begin{aligned}
    E_{0,J}^{\epsilon}(X)=((I_N\otimes  W_1)\mathcal{W}_{0,J}X)^{\top} (\mathcal{L}_F+\epsilon D^{-1})((I_N\otimes  W_1)\mathcal{W}_{0,J}X)\\
    E_{r,j}^{\epsilon}(X)=((I_N\otimes  W_1)\mathcal{W}_{r,j}X)^{\top} (\mathcal{L}_F-\epsilon D^{-1})((I_N\otimes  W_1)\mathcal{W}_{r,j}X),
\end{aligned}
\end{equation}
where $D$ is the degree matrix of the sheaf Laplacian. The total sheaflet Dirichlet energy $E^{\epsilon}_\mathcal{F}(X) = E_{0,J}^{\epsilon}(X) + \sum_{r,j}E_{r,j}^{\epsilon}(X)$. The original sheaf Dirichlet energy is defined as $$E_{\mathcal{F}}(X)=((I_N\otimes  W_1)X)^\top\mathcal{L}_F((I_N\otimes  W_1)X).$$
It can be similarly proved that $E^{\epsilon}_\mathcal{F}(X)>E_{\mathcal{F}}(X)$ when $\epsilon>0$.

\section{Experimental Details}
\subsection{Experimental Setting}
The implementation of our model and training is based on PyTorch \cite{paszke2019pytorch} on NVIDIA Tesla A100 GPU with 6,912 CUDA cores and 80GB HBM2 mounted on an HPC cluster. PyTorch Geometric Library \cite{fey2019fast} is employed for all the benchmark datasets and baseline models. For each model, we run 2000 epochs for ogb-arxiv and 300 epochs for other datasets and select the configuration with the highest validation accuracy. The results in Table~\ref{tab:11} and Table~\ref{tab:22} are the average performance of each model over 10 fixed public splits.

\subsection{Datasets}\label{append:datasets}
We conduct experiments over 8 node classification datasets in 3 types:
\begin{enumerate}
    \item \textbf{Citation Network:} The Cora, CiteSeer, PubMed are citation network datasets introduced by \cite{yang2016revisiting}, where nodes represent documents in the computer science fields and edges represent citation links. 
    \item \textbf{Webpage Network:} The Texas, Wisconsin, and Cornell are webpage network datasets introduced by \cite{pei2020geom}. Nodes are the web pages and edges are the hyperlinks between them. Node features are bag-of-words representations of web pages. Nodes are classified into one of five categories: Students, Projects, Courses, Faculty and Staff.
    \item \textbf{Wikipedia Network:} The Chameleon and Squirrel are Wikipedia network datasets, introduced by \cite{rozemberczki2021multi}. Nodes are the web pages and edges are the hyperlinks between them. Node features represent several informative nouns on Wikipedia pages.
    \item \textbf{Ogb-arxiv:} The ogb-arxiv dataset is a citation network of Computer Science arxiv papers introduced by~\cite{wang2020microsoft}. Each node represents a paper and each directed edge indicates that one paper cites another one. Each node has a 128-dimensional feature that is averaged from the embeddings of words in its title and abstract. The task is to predict the 40 categories of the arXiv CS papers, which used to be labeled manually. However, with the increasing volume of CS papers, it is necessary to develop an automatic classification model. 
\end{enumerate}
Citation Network, Webpage Network and Wikipedia Network datasets are available at \url{https://pytorch-geometric.readthedocs.io/en/latest/modules/datasets.html}. 
Ogb-arxiv dataset is available at \url{https://ogb.stanford.edu/docs/nodeprop/#ogbn-arxiv}.

\begin{figure}
    \centering
    \includegraphics[width=0.9\textwidth]{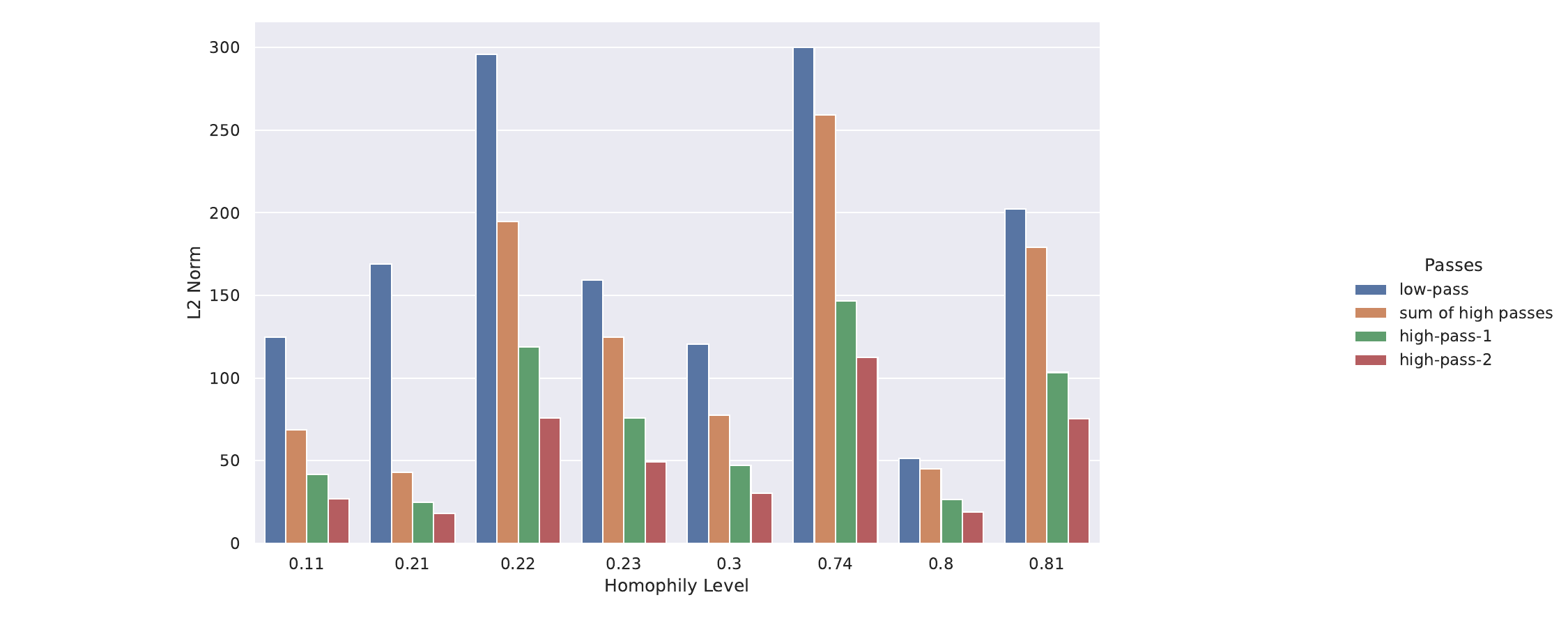}
    \vspace{-2mm}
    \caption{Energy ($L_2$ norm) of framelet coefficients for the 8 datasets.}
    \label{fig:L2_norm}
\end{figure}

\begin{figure}[h]
    \centering
    \includegraphics[width=0.7\textwidth]{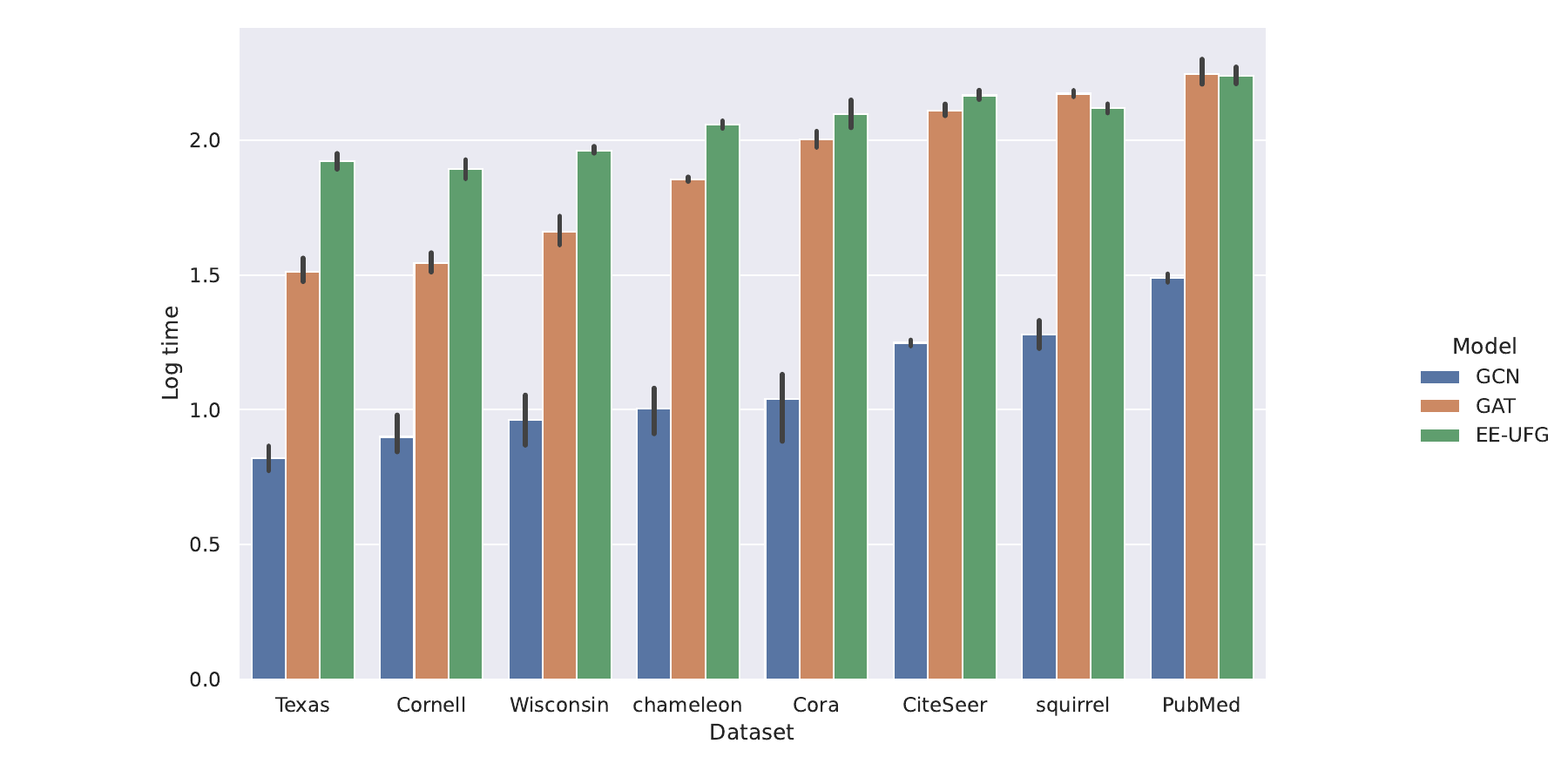}
    \vspace{-4mm}
    \caption{Running time comparison}
    \label{fig:complexity}
\end{figure}

\subsection{Baselines Selection} We select classic GNNs and state-of-the-art methods for heterophilous graphs and over-smoothing issues as our baselines: (1) classic GNN models: GCN \cite{kipf2016semi}, GAT \cite{velivckovic2017graph}, GraphSAGE \cite{hamilton2017inductive}, UFG \cite{zheng2021framelets}; (2) GNNs that can circumvent over-smoothing: GRAND \cite{chamberlain2021grand}, PairNorm \cite{zhao2019pairnorm}, GCNII \cite{chen2020simple}, EGNN \cite{zhou2021dirichlet}; (3) models for heterophilous graphs: FAGCN \cite{bo2021beyond}, MixHop \cite{abu2019mixhop}. For each model, we use the official codes provided by the authors. All models use the same train/validate/test split for a fair comparison. We notice that there were some impressive performance over Cora/CiteSeer/PubMed reported in the previous literature, like H2GCN~\cite{zhu2020beyond}, Geom-GCN~\cite{pei2020geom}, GGCN~\cite{yan2021two}. We do not adopt them as baseline models here because they randomly generate the train/validate/test split, which is different from our experimental setting.

\subsection{Computational Complexity}\label{append:complexity}
The time complexity of the algorithm is important for real-world deployment, especially for extremely large graph data. The framelet transform is equivalent to left-multiplying a specific transformation matrix. We stack the transformation matrices to obtain a tensor-based framelet transform with the computational complexity of $\mathcal{O}(N^2(nJ+1)d)$. $N$ is the number of nodes, $d$ is the feature dimension, $n$ is the number of high-pass filters and $J$ is the scale level of the low-pass. In our implementation, we fix $n=1$, $J=2$. Benefiting from an efficient message passing operator in PyTorch Geometry, we construct a large sparse adjacency matrix and stack all the passes, thus, the message passing in all passes can be executed in parallel.

Figure~\ref{fig:complexity} plots the running time on eight datasets we used. The number of nodes increases sequentially from left to right on the X-axis. The Y-axis is the logarithm of running time (in seconds). Each model has the same configuration, including hidden units, number of layers, etc., and run 300 epochs. We can observe from the figure summary that EE-UFG has a computational complexity close to GAT, especially when the number of nodes is large.

\begin{table}[th]
\centering
% \vspace{-0.2cm}
\setlength{\tabcolsep}{2mm}
% \resizebox{0.6\linewidth}{!}
{
\begin{tabular}{lr}
    \toprule
    Parameter & Search Space\\
    \midrule
    Learning rate &$[1\times 10^{-5}, 1\times 10^{-1}]$\\
    Hidden units &$\{16,32,64,128\}$\\
    Number of layers & $[1,10]$\\
    Epsilon &$[1\times 10^{-5}, 1\times 10^{-1}]$\\
    Dropout rate & $\{0.2,0.4,0.6,0.8\}$\\
    Weight decay &$[5\times 10^{-3},1\times 10^{-2}]$\\
    \bottomrule
  \end{tabular}}\vspace{0.5cm}
  \caption{Hyper-parameter Search Spaces of EE-UFG}\label{tab:33}
\end{table} 

\subsection{Hyper-parameter and Model Implementation}\label{appendix:hyper}
We employ Adam \cite{kingma2014adam} as our optimizer, and \texttt{ASHAScheduler} \cite{li2018massively} as our scheduler. Each model is fine-tuned with \texttt{Ray} \cite{liaw2018tune}. Table~\ref{tab:33} provides the hyper-parameter search space for reproduction. All baseline models are implemented with the official codes released by the authors.

\end{document}